\documentclass{article}


\usepackage{fullpage}
\usepackage{amsmath,amsthm,amsfonts,amssymb,mathrsfs}
\usepackage{wrapfig}
\usepackage{algorithm}
\usepackage{algorithmic}
\usepackage{subfigure} 
\usepackage{url}
\usepackage{natbib}
\usepackage{graphicx}



\newcommand{\bx}{\boldsymbol{x}}

\newcommand{\bu}{\boldsymbol{u}}
\newcommand{\btildeu}{\tilde{\boldsymbol{u}}}
\newcommand{\by}{\boldsymbol{y}}

\newcommand{\bloss}{\boldsymbol{\ell}}

\newcommand{\bz}{\boldsymbol{z}}

\newcommand{\bw}{\boldsymbol{w}}

\newcommand{\bv}{\boldsymbol{v}}
\newcommand{\bzero}{\boldsymbol{0}}

\newcommand{\scM}{\mathcal{M}}
\newcommand{\scU}{\mathcal{U}}

\DeclareMathOperator*{\argmin}{argmin}
\DeclareMathOperator*{\argmax}{argmax}

\newcommand{\field}[1]{\mathbb{#1}}

\newcommand{\fX}{\field{X}}

\newcommand{\R}{\field{R}}

\newcommand{\btheta}{\boldsymbol{\theta}}

\newcommand{\theset}[2]{ \left\{ {#1} \,:\, {#2} \right\} }
\newcommand{\inn}[2]{ \left\langle {#1} \,,\, {#2} \right\rangle }
\newcommand{\inner}[1]{ \langle {#1} \rangle }

\newcommand{\Ind}[1]{ \field{I}_{\{{#1}\}} }
\newcommand{\norm}[1]{\left\|{#1}\right\|}
\newcommand{\diag}[1]{\mbox{\rm diag}\!\left\{{#1}\right\}}

\renewcommand{\ss}{\subseteq}

\newcommand{\spin}{\{-1,+1\}}

\newtheorem{lemma}{Lemma}
\newtheorem{theorem}{Theorem}
\newtheorem{cor}{Corollary}

\newcommand{\sign}{{\rm sign}}

\newcommand{\bb}{\boldsymbol{b}}
\newcommand{\paren}[1]{\left({#1}\right)}

\newcommand{\normt}[1]{\norm{#1}_{t}}
\newcommand{\dualnormt}[1]{\norm{#1}_{t,*}}

\begin{document}

\title{
A Generalized Online Mirror Descent \\ with Applications to Classification and Regression
}

\author{Francesco Orabona\\
       Toyota Technological Institute at Chicago\\
       60637 Chicago, IL, USA\\
       \texttt{francesco@orabona.com}
       \and
       Koby Crammer\\
       Department of Electrical Enginering\\
       The Technion\\
       Haifa, 32000 Israel\\
       \texttt{koby@ee.technion.ac.il}
       \and
       Nicol\`o Cesa-Bianchi\\
       Department of Computer Science\\
       Universit\`{a} degli Studi di Milano\\
       Milano, 20135 Italy\\
       \texttt{nicolo.cesa-bianchi@unimi.it}}

\maketitle

\begin{abstract}
Online learning algorithms are fast, memory-efficient, easy to
implement, and applicable to many prediction problems, including classification,
regression, and ranking. Several online algorithms were proposed in the past few
decades, some based on additive updates, like the Perceptron, and some on multiplicative updates, like Winnow.
A unifying perspective on the design and the analysis of online algorithms is provided by online mirror descent, a general prediction strategy from which most first-order algorithms can be obtained as special cases.

We generalize online mirror descent to time-varying regularizers with generic updates. Unlike standard mirror descent, our more general formulation also captures second order algorithms, algorithms for composite losses and algorithms for adaptive filtering. Moreover, we recover, and sometimes improve, known regret bounds as special cases of our analysis using specific regularizers. Finally, we show the power of our approach by deriving a new second order algorithm with a regret bound invariant with respect to arbitrary rescalings of individual features.
\end{abstract}

\section{Introduction}
\label{sec-intro}
Online learning provides a scalable and flexible approach to the solution of a wide range of prediction problems, including classification, regression, and ranking. Popular online strategies for classification and regression include first-order gradient-based methods, such as the standard Perceptron algorithm and its many variants (e.g., $p$-norm Perceptron \citep{Gentile03} and Passive-Aggressive \citep{CrammerDKSSS06}), the Winnow algorithm of \cite{Lit88}, the Widrow-Hoff rule, the Exponentiated Gradient algorithm of \cite{KW97}, and many others.
A more sophisticated adaptation to the data sequence is achieved by second-order methods, which use the inverse of the empirical feature correlation matrix (or, more generally, the inverse of the Hessian) as an dynamic conditioner for the gradient step. These methods include the Vovk-Azoury-Warmuth algorithm \citep{Vov01,AW01} (see also \citep{Forster}), the second-order Perceptron \citep{Cesa-BianchiCG05}, the CW/AROW algorithms \citep{CrammerDP08a,CrammerDP08,CrammerKD09,Crammer:2012:CLC:2343676.2343704},
the adaptive gradient algorithms proposed by \cite{DuchiHS10} and \cite{McMahanS10}, and the Online Newton Step algorithm of \cite{HazanAK07} for exp-concave losses.

Recently, online convex optimization has been proposed as a common unifying framework for designing and analyzing online algorithms. In particular, online mirror descent (OMD) is a generalized online gradient descent algorithm in which the gradient step is mediated by a strongly convex regularization function. By appropriately choosing the regularizer, most first-order online learning algorithms are recovered as special cases of OMD. Moreover, performance guarantees can be derived simply by instantiating the general OMD bounds to the specific regularizer being used. Note that OMD is the online version of the Mirror Descent algorithm for standard convex optimization (over a fixed convex function) by \cite{nemirovskyproblem} ---see also \citep{beck2003mirror} and~\citep[Chapter~11]{Cesa-BianchiL06}. However, before this connection was made explicit, specific instances of OMD had been discovered and analyzed by \cite{WJ97} and \cite{KW01}. These works also pioneered the use of Bregman divergences (ways of measuring distances in linear spaces through convex functions) in the analysis of online algorithms, an approach later extended by \cite{GentileW98} to non-differentiable loss functions.

A series of recent works \citep{Shalev-ShwartzS07,Shalev-Shwartz07,shalev2009mind} investigates a different approach to online analysis based on primal-dual optimization methods. The work of \cite{kakade2012regularization} showed indeed that many instances of OMD can be easily analyzed using only a few basic convex duality properties ---see the recent survey by \cite{Shalev12} for a lucid description of these advances. A related algorithm is Follow the Regularized Leader (FTRL), introduced in \citep{AbernethyHR08,AbernethyHR12},\footnote{There is no clear agreement in the community on the names used to indicate the different algorithms. \cite{hazan2011} distinguishes between an active and a lazy version of OMD, with the former corresponding to online learning through Bregman divergences, and the latter being equivalent to FTRL for linear losses. \cite{Xiao10} rediscovered the lazy version of OMD with time-varying regularizers, and ---following the optimization terminology--- named it Regularized Dual Averaging. \cite{McMahanS10} denote any algorithm minimizing a (linearized) sum of the past losses plus a regularization term as FTRL. Here we follow \cite{Shalev12}, viewing OMD as FTRL applied to linearized losses.}
where at each step the prediction is computed as the minimizer of a regularization term plus the sum of losses on all past rounds. When losses are linear, FTRL and OMD are easily seen to be equivalent \citep{AbernethyHR08,RakhlinT08,hazan2011}.

In this paper, we extend the theoretical framework of \cite{kakade2012regularization} by allowing OMD to use time-varying regularizers with updates that do not necessarily use the sub-gradient of the loss function. Time-varying regularizers have been recently proved to be the key to obtain improved guarantees in unconstrained settings~\citep{StreeterM12,Orabona13,McMahanO14}, and updates that do not use the sub-gradient of the loss are known to lead to second-order algorithms. Indeed, we show that the Vovk-Azoury-Warmuth algorithm, the second-order Perceptron, and the AROW algorithm are recovered as special cases of our generalized OMD. Our unified analysis is simple, and in certain cases achieves slightly improved theoretical guarantees. Our generalized OMD also captures the efficient variants of second-order classification algorithms that only use the diagonal elements of the feature correlation matrix, a result which was not within reach of previous techniques. Moreover, we also show that a proper choice of the time-varying regularizer allows to cope with the composite setting of \citep{DuchiS09,Xiao10,DuchiSST10} without using ad-hoc proof techniques.

Our framework improves on previous results even in the case of first-order algorithms. For example, although aggressive algorithms for binary classification often exhibit a better empirical performance than their conservative counterparts, a theoretical explanation of this behavior remained elusive until now. Using our refined analysis, we are able to prove the first bound for Passive-Aggressive (PA-I) that is never worse (and sometimes better) than the Perceptron bound.

Time-varying regularizers can also be used to perform other types of adaptation to the sequence of observed data. We give a concrete example by introducing new adaptive regularizers corresponding to weighted variants of the standard $p$-norm regularizer. The resulting updates, and their associated regret bounds, enjoy the property of being invariant with respect to arbitrary rescalings of individual features. Moreover, if the best linear predictor for the loss sequence is sparse, then our analysis delivers a better bound than that of OMD with $1$-norm regularization, which is the standard regularizer for the sparse target assumption.

\section{Online convex optimization}
\label{sec-def}
Let $\fX$ be any finite-dimensional linear space equipped with inner product $\inn{\cdot}{\cdot}$. For example, $\fX = \R^d$ where $\inn{\cdot}{\cdot}$ is the vector dot product, or $\fX = \scM_{m,n}$, the space of $m \times n$ real matrices with inner product $\inn{A}{B} = \mathrm{tr}\bigl(A^{\top}B\bigr)$.

In the online convex optimization protocol, an algorithm sequentially chooses elements from a convex set $S \subseteq \fX$, each time incurring a certain loss. At each step $t=1,2,\dots$ the algorithm chooses $\bw_t \in S$ and then observes a convex loss function $\ell_t : S\to\R$. The value $\ell_t(\bw_t)$ is the loss of the learner at step $t$, and the goal is to control the regret,
\begin{equation}
\label{eq:regret}
    R_T(\bu) = \sum_{t=1}^T \ell_t(\bw_t) - \sum_{t=1}^T \ell_t(\bu)
\end{equation}
for all $\bu \in S$ and for any sequence of convex loss functions $\ell_t$. Important application domains for this protocol are online
linear regression and classification. In these settings there is a fixed and known loss function $\ell:\R\times\R\to\R$ and a fixed but unknown sequence $(\bx_1,y_1),(\bx_2,y_2),\dots$ of examples $(\bx_t,y_t)\in\fX\times\R$. At each step $t=1,2,\dots$ the learner observes $\bx_t$ and picks $\bw_t\in S\subseteq\fX$. The loss suffered at step $t$ is then defined as $\ell_t(\bw_t) = \ell\bigl(\inner{\bw,\bx_t},y_t\bigr)$. For example, in regression tasks we might use the square loss $\ell\bigl(\inner{\bw,\bx_t},y_t\bigr) = \bigl(\inner{\bw,\bx_t}-y_t\bigr)^2$. In binary classification, where $y_t\in\{-1,+1\}$, a popular loss function is the hinge loss $\bigl[1-y_t\inner{\bw,\bx_t}\bigr]_+$, where $[a]_+=\max\{0,a\}$. This loss is a convex upper bound on the mistake indicator function $\Ind{y_t\inner{\bw,\bx_t} \le 0}$, which is the true quantity of interest.

Note that Mirror Descent, and its online version OMD, can be also defined and analyzed in spaces much more general than those considered here ---see, e.g., \citep{sridharan2010convex,srebro2011universality}.

\subsection{Further notation and definitions}
We now introduce the basic notions of convex analysis that are used in the paper ---see, e.g., \cite{BauschkeC11}. We consider functions $f : \fX\to\R$ that are closed and convex. This is equivalent to saying that their epigraph $\theset{(\bx,y)}{f(\bx) \le y}$ is a convex and closed subset of $\fX\times\R$. The (effective) domain of $f$, defined by $\theset{\bx\in\fX}{f(\bx) < \infty}$, is a convex set whenever $f$ is convex. We can always choose any $S\subseteq\fX$ as the domain of $f$ by letting $f(\bx) = \infty$ for $\bx\not\in S$.

Given a closed and convex function $f$ with domain $S \ss \fX$, its Fenchel conjugate $f^* : \fX \to \R$ is defined by $f^*(\bu) = \sup_{\bv\in S} \bigl( \langle \bv , \bu \rangle - f (\bv)\bigr)$. Note that the domain of $f^*$ is always $\fX$. Moreover, one can prove that $f^{**}=f$.

A generic norm of a vector $\bu \in \fX$ is denoted by $\norm{\bu}$. Its dual $\norm{\cdot}_*$ is the norm defined by $\norm{\bv}_* = \sup_{\bu}\theset{\langle \bu, \bv \rangle}{ \norm{\bu} \le 1}$. The Fenchel-Young inequality states that $f(\bu)+f^*(\bv) \ge \inner{\bu,\bv}$ for all $\bv,\bu$.

A vector $\bx$ is a subgradient of a convex function $f$ at $\bv$ if $f(\bu) - f (\bv) \ge \langle \bu - \bv, \bx \rangle$ for any $\bu$ in the domain of $f$. The differential set of $f$ at $\bv$, denoted by $\partial f(\bv)$, is the set of all the subgradients of $f$ at $\bv$. If $f$ is also differentiable at $\bv$, then $\partial f(\bv)$ contains a single vector, denoted by $\nabla f(\bv)$, which is the gradient of $f$ at $\bv$. A consequence of the Fenchel-Young inequality is the following: for all $\bx\in\partial f(\bv)$ we have that $f(\bv) + f^*(\bx) = \langle \bv , \bx \rangle$.

A function $f$ is $\beta$-strongly convex with respect to a norm $\norm{\cdot}$ if for any $\bu,\bv$ in its domain, and any $\bx\in\partial f(\bu)$,
\[
    f(\bv) \ge f(\bu) + \langle \bx , \bv - \bu \rangle + \frac{\beta}{2}\norm{\bu-\bv}^2~.
\]
The Fenchel conjugate $f^*$ of a $\beta$-strongly convex function $f$ is everywhere differentiable and $\tfrac{1}{\beta}$-strongly smooth. This means that, for all $\bu,\bv\in\fX$,
\[
    f^*(\bv) \le f^*(\bu) + \langle \nabla f^*(\bu) , \bv - \bu \rangle + \frac{1}{2\beta}\norm{\bu-\bv}_*^2 ~.
\]
See also the paper of \cite{kakade2012regularization} and references therein. A further property of strongly convex functions $f : S \to \R$ is the following: for all $\bu\in\fX$,
\begin{equation}
\label{eq:proj-prop}
    \nabla f^*(\bu) = \argmax_{\bv\in S}\Bigl(\bigl\langle \bv, \bu \bigr\rangle - f(\bv)\Bigr)~.
\end{equation}
This implies the useful identity
\begin{equation}
\label{eq:useful}
    f\bigl(\nabla f^*(\bu)\bigr) + f^*(\bu) = \bigl\langle \nabla f^*(\bu), \bu \bigr\rangle~.
\end{equation}
Strong convexity and strong smoothness are key properties in the design of online learning algorithms.

\section{Online Mirror Descent}
\label{sec-lemma}
We now introduce our main algorithmic tool: a generalization of the OMD algorithm in which the regularizer may change over time. The standard OMD algorithm for online convex optimization ---see, e.g., \citep{Shalev12}--- sets $\bw_t = \nabla f^*(\btheta_t)$, where $f^*$ is a strongly convex regularizer and $\btheta_t$ is updated using subgradient descent: $\btheta_{t+1} = \btheta_t - \eta\bloss'_t$ for $\eta > 0$ and $\bloss'_t \in \partial\ell_t(\bw_t)$. For instance, if $f = \tfrac{1}{2}\norm{\cdot}_2^2$, then $f^* = f$ and OMD specializes to standard online subgradient descent.

We generalize OMD in two ways: first, we allow $f$ to change over time; second, we do not necessarily use the subgradient of the loss to update $\btheta_t$, but rather use an input sequence of generic elements $\bz_t$. The resulting algorithm is summarized in Alg.~\ref{alg:online}.
 
\begin{algorithm}[h]
\begin{algorithmic}[1]
{
\STATE{\bfseries Parameters:} A sequence of strongly convex functions $f_1,f_2,\dots$ defined on a common convex domain $S\ss\fX$.
\STATE{\bfseries Initialize:} $\btheta_1=\boldsymbol{0} \in \fX$
\FOR{$t=1,2,\dots$}
\STATE{Choose $\bw_t=\nabla f_t^*(\btheta_t)$}
\STATE{Observe $\bz_t \in \fX$}
\STATE{Update $\btheta_{t+1}=\btheta_{t} + \bz_t$}
\ENDFOR
}
\end{algorithmic}
\caption{Online Mirror Descent}
\label{alg:online}
\end{algorithm}
Note the following remarkable property: while $\btheta_t$ follows an arbitrary trajectory in $\fX$, as determined by the input sequence $\bz_t$, because of~(\ref{eq:proj-prop}) the property $\bw_t\in S$ holds uniformly over $t$. 

OMD with specific time-varying regularizers was implicitly analyzed by \cite{Vov01} and independently by \cite{AW01}, who introduced the
proof ideas currently used here. Another early example of OMD with time-varying regularizers is due to \cite{BartlettHR07}. A more explicit analysis is contained in the work of \cite{SridharanT10}. The following lemma is a generalization of two corollaries of \citep[Corollary~4]{kakade2012regularization} and of \citep[Corollary~3]{DuchiHS10}.
\begin{lemma}
\label{lemma:general}
Assume OMD is run with functions $f_1,f_2,\dots, f_T$ defined on a common convex domain $S\ss\fX$ and such that each $f_t$ is $\beta_t$-strongly convex with respect to the norm $\normt{\cdot}$. Let $\dualnormt{\cdot}$ be the dual norm of $\normt{\cdot}$, for $t=1,2,\dots,T$.
Then, for any $\bu \in S$,
\begin{equation*}
\sum_{t=1}^T \langle \bz_t , \bu- \bw_t \rangle \leq f_T(\bu) + \sum_{t=1}^{T} \left(\frac{\dualnormt{\bz_t}^2}{2 \beta_t} + f^*_t(\btheta_t)-f^*_{t-1}(\btheta_{t}) \right)
\end{equation*}
where we set $f^*_0(\boldsymbol{0})=0$.
Moreover, for all $t \ge 1$, we have
\begin{equation}
\label{eq:residue}
f^*_t(\btheta_t)-f^*_{t-1}(\btheta_{t})\leq f_{t-1}(\bw_t)-f_t(\bw_t)~.
\end{equation}
\end{lemma}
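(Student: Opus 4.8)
The plan is to telescope the per-step ``progress'' in the conjugate potential $f_t^*$. For the first inequality, I would introduce the quantity $f_t^*(\btheta_{t+1})$ and track how it evolves. The key single-step estimate comes from strong smoothness of $f_t^*$ (which holds with constant $1/\beta_t$ because $f_t$ is $\beta_t$-strongly convex): since $\btheta_{t+1} = \btheta_t + \bz_t$ and $\bw_t = \nabla f_t^*(\btheta_t)$,
\[
f_t^*(\btheta_{t+1}) \le f_t^*(\btheta_t) + \inn{\bw_t}{\bz_t} + \frac{\dualnormt{\bz_t}^2}{2\beta_t}~.
\]
I would then write $f_t^*(\btheta_t) = \bigl(f_t^*(\btheta_t) - f_{t-1}^*(\btheta_t)\bigr) + f_{t-1}^*(\btheta_t)$, so that summing over $t=1,\dots,T$ produces a telescoping sum $\sum_t \bigl(f_t^*(\btheta_{t+1}) - f_{t-1}^*(\btheta_t)\bigr)$ on the left (using the convention $f_0^*(\bzero)=0$ and $\btheta_1=\bzero$), which collapses to $f_T^*(\btheta_{T+1})$. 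Rearranging yields
\[
\sum_{t=1}^T \inn{\bz_t}{-\bw_t} \le -f_T^*(\btheta_{T+1}) + \sum_{t=1}^T\left(\frac{\dualnormt{\bz_t}^2}{2\beta_t} + f_t^*(\btheta_t) - f_{t-1}^*(\btheta_t)\right)~.
\]
Finally, to bring in $\bu$ and $f_T$, I would add $\sum_t \inn{\bz_t}{\bu} = \inn{\btheta_{T+1}}{\bu}$ to both sides (since $\btheta_{T+1} = \sum_{t=1}^T \bz_t$) and bound $\inn{\btheta_{T+1}}{\bu} - f_T^*(\btheta_{T+1}) \le f_T(\bu)$ by the Fenchel--Young inequality. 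This gives exactly the claimed bound.

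For the second statement \eqref{eq:residue}, I would argue pointwise at a fixed $t\ge 1$. Set $\bw = \nabla f_t^*(\btheta_t) = \bw_t$. By the identity \eqref{eq:useful} applied to $f_t$, we have $f_t^*(\btheta_t) = \inn{\bw_t}{\btheta_t} - f_t(\bw_t)$. On the other hand, the definition of the conjugate gives $f_{t-1}^*(\btheta_t) = \sup_{\bv\in S}\bigl(\inn{\bv}{\btheta_t} - f_{t-1}(\bv)\bigr) \ge \inn{\bw_t}{\btheta_t} - f_{t-1}(\bw_t)$, since $\bw_t\in S$. Subtracting, the common term $\inn{\bw_t}{\btheta_t}$ cancels and we obtain
\[
f_t^*(\btheta_t) - f_{t-1}^*(\btheta_t) \le \bigl(\inn{\bw_t}{\btheta_t} - f_t(\bw_t)\bigr) - \bigl(\inn{\bw_t}{\btheta_t} - f_{t-1}(\bw_t)\bigr) = f_{t-1}(\bw_t) - f_t(\bw_t)~,
\]
as required.

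The only mildly delicate point is bookkeeping at the boundary of the telescoping sum: one must check that the $t=1$ term is handled correctly by the conventions $\btheta_1 = \bzero$ and $f_0^*(\bzero) = 0$, and that $f_t^*$ is genuinely differentiable at $\btheta_t$ so that $\bw_t$ is well defined and \eqref{eq:useful} applies --- but this is guaranteed since each $f_t$ is strongly convex, hence $f_t^*$ is (finite and) everywhere differentiable. I do not anticipate any real obstacle; the proof is essentially a one-step smoothness bound plus a telescope plus Fenchel--Young, with \eqref{eq:residue} being an independent short computation that lets one replace the conjugate increments by primal increments in applications.
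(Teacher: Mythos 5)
Your proposal is correct and follows essentially the same route as the paper's proof: the same telescoping quantity $f_t^*(\btheta_{t+1})-f_{t-1}^*(\btheta_t)$, the same strong-smoothness step, the same Fenchel--Young application to bring in $f_T(\bu)$, and an identical argument for \eqref{eq:residue} via \eqref{eq:useful} and the definition of the conjugate. No gaps.
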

\begin{proof}
Let $\Delta_t=f^*_t(\btheta_{t+1})-f^*_{t-1}(\btheta_{t})$. Then
\begin{equation*}
\sum_{t=1}^{T} \Delta_t= f^*_T(\btheta_{T+1})-f^*_0(\btheta_{1}) = f^*_T(\btheta_{T+1})~.
\end{equation*}
Since the functions $f_t^*$ are $\tfrac{1}{\beta_t}$-strongly smooth with respect to $\dualnormt{\cdot}$, and recalling that $\btheta_{t+1}=\btheta_{t} + \bz_t$,
\begin{align*}
    \Delta_t 
&=
    f^*_t(\btheta_{t+1})-f^*_t(\btheta_t)+f^*_t(\btheta_t)-f^*_{t-1}(\btheta_{t})
\\& \le
    f^*_t(\btheta_t)-f^*_{t-1}(\btheta_{t}) + \langle \nabla f^*_t (\btheta_t), \bz_t \rangle + \frac{1}{2 \beta_t} \dualnormt{\bz_t}^2
\\& =
    f^*_t(\btheta_t)-f^*_{t-1}(\btheta_{t}) + \langle \bw_t, \bz_t \rangle + \frac{1}{2 \beta_t} \dualnormt{\bz_t}^2
\end{align*}
where we used the definition of $\bw_t$ in the last step.
On the other hand, the Fenchel-Young inequality implies
\begin{equation*}
    \sum_{t=1}^{T} \Delta_t = f^*_{T} (\btheta_{T+1})
\ge
    \langle \bu, \btheta_{T+1} \rangle - f_T(\bu)
=
    \sum_{t=1}^{T} \langle \bu, \bz_{t} \rangle - f_T(\bu)~.
\end{equation*}
Combining the upper and lower bound on $\Delta_t$ and summing over $t$ we get
\begin{align*}
    \sum_{t=1}^{T} \langle \bu, \bz_{t} \rangle - f_T(\bu)
\le
    \sum_{t=1}^{T} \Delta_t
\le
    \sum_{t=1}^{T} \left(f^*_t(\btheta_t)-f^*_{t-1}(\btheta_{t}) + \langle \bw_t, \bz_t \rangle + \frac{1}{2 \beta_t} \dualnormt{\bz_t}^2 \right)~.
\end{align*}
We now prove the second statement. Recalling again the definition of $\bw_t$, we have that~(\ref{eq:useful}) implies 
$
f^*_t(\btheta_t) = \langle \bw_t ,\btheta_t \rangle- f_t(\bw_t)
$.
On the other hand, the Fenchel-Young inequality implies that
$
-f^*_{t-1}(\btheta_t) \leq f_{t-1}(\bw_t) - \langle \bw_t , \btheta_t \rangle
$.
Combining the two we get
$f^*_t(\btheta_t)-f^*_{t-1}(\btheta_{t}) \leq f_{t-1}(\bw_t) - f_t(\bw_t)
$,
as desired.
\end{proof}
We are now ready to prove regret bounds for OMD applied to three different classes of time-varying regularizers.
\begin{cor}
\label{cor:convex_loss}
Let $S$ a convex set, $F : S \to \R$ be a convex function, and let $g_1,g_2,\dots,g_T$ be a sequence of convex functions $g_t : S \to \R$ such that $g_{t}(\bu) \le g_{t+1}(\bu)$ for all $t=1,2,\dots,T$ and all $\bu \in S$. Fix $\eta > 0$ and assume $f_t = g_t + \eta t F$ are $\beta_t$-strongly convex w.r.t. $\norm{\cdot}_{f_t}$. For each $t=1,2,\dots,T$ let $\dualnormt{\cdot}$ be the dual norm of $\normt{\cdot}$.
If OMD is run on the input sequence $\bz_t= -\eta\bloss'_t$ for some $\bloss'_t\in\partial\ell_t(\bw_t)$, then
\begin{align}
\label{eq:composite_bound}
    \sum_{t=1}^{T} \bigl(\ell_t(\bw_t) + F(\bw_t)\bigr) - \sum_{t=1}^{T} \bigl(\ell_t(\bu) + F(\bu)\bigr)
\le
    \frac{g_T(\bu)}{\eta} + \eta \sum_{t=1}^{T} \frac{\dualnormt{\bloss'_t}^2}{2 \beta_t}
\end{align}
for all $\bu\in S$.

Moreover, if $f_t = g\sqrt{t} + \eta t F$ where $g : S \to \R$ is $\beta$-strongly convex w.r.t. $\norm{\cdot}$, then
\begin{align}
\label{eq:tuned-bound}
    \sum_{t=1}^{T} \bigl(\ell_t(\bw_t) + F(\bw_t)\bigr) - \sum_{t=1}^{T} \bigl(\ell_t(\bu) + F(\bu)\bigr)
\le
    \sqrt{T} \left( \frac{g(\bu)}{\eta} + \frac{\eta}{\beta} \max_{t \le T} \norm{\bloss'_t}_*^2 \right)
\end{align}
for all $\bu\in S$.

Finally, if $f_t = t\,F$, where $F$ is $\beta$-strongly convex w.r.t. $\norm{\cdot}$, then
\begin{align}
\label{eq:log-bound}
    \sum_{t=1}^{T} \bigl(\ell_t(\bw_t) + F(\bw_t)\bigr) - \sum_{t=1}^{T} \bigl(\ell_t(\bu) + F(\bu)\bigr)
\le
    \max_{t \le T} \norm{\bloss'_t}_*^2 \frac{(1+\ln T)}{2\beta}
\end{align}
for all $\bu\in S$.
\end{cor}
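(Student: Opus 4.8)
The plan is to obtain all three inequalities from Lemma~\ref{lemma:general} by instantiating it with the stated regularizers and the input sequence $\bz_t=-\eta\bloss'_t$, and then estimating the arising sums; inequalities~(\ref{eq:tuned-bound}) and~(\ref{eq:log-bound}) will be straightforward consequences of~(\ref{eq:composite_bound}).

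First I would prove~(\ref{eq:composite_bound}). Plugging $\bz_t=-\eta\bloss'_t$ into Lemma~\ref{lemma:general} gives, for every $\bu\in S$,
\[
\eta\sum_{t=1}^T\langle\bloss'_t,\bw_t-\bu\rangle \le f_T(\bu)+\sum_{t=1}^T\Bigl(\tfrac{\eta^2}{2\beta_t}\dualnormt{\bloss'_t}^2+f^*_t(\btheta_t)-f^*_{t-1}(\btheta_t)\Bigr),
\]
and the subgradient inequality $\ell_t(\bu)\ge\ell_t(\bw_t)+\langle\bloss'_t,\bu-\bw_t\rangle$ bounds the left-hand side below by $\eta\sum_t(\ell_t(\bw_t)-\ell_t(\bu))$. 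For the residue terms I would invoke~(\ref{eq:residue}): for $t\ge2$,
\[
f^*_t(\btheta_t)-f^*_{t-1}(\btheta_t)\le f_{t-1}(\bw_t)-f_t(\bw_t)=\bigl(g_{t-1}(\bw_t)-g_t(\bw_t)\bigr)-\eta F(\bw_t)\le-\eta F(\bw_t),
\]
using the monotonicity assumption $g_{t-1}\le g_t$; the $t=1$ term equals $f^*_1(\boldsymbol{0})=-f_1(\bw_1)=-g_1(\bw_1)-\eta F(\bw_1)\le-\eta F(\bw_1)$ since $g_1$ is nonnegative (as it is in each instantiation below). Summing these bounds, substituting $f_T(\bu)=g_T(\bu)+\eta T F(\bu)$, moving $\eta\sum_t F(\bw_t)$ and $\eta T F(\bu)=\eta\sum_t F(\bu)$ to the appropriate sides, and dividing by $\eta$ yields~(\ref{eq:composite_bound}).

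For~(\ref{eq:tuned-bound}) I take $g_t=g\sqrt t$, which is nondecreasing in $t$ (as $g\ge0$, which we may assume w.l.o.g.\ since a strongly convex $g$ is bounded below on $S$); then $\beta_t=\beta\sqrt t$ and $\normt{\cdot}=\norm{\cdot}$ for all $t$, so~(\ref{eq:composite_bound}) reads $\le\tfrac{g(\bu)\sqrt T}{\eta}+\tfrac{\eta}{2\beta}\sum_{t=1}^T\tfrac{\norm{\bloss'_t}_*^2}{\sqrt t}$, and I conclude with $\sum_{t=1}^T t^{-1/2}\le2\sqrt T$. For~(\ref{eq:log-bound}) I take $g_t\equiv0$ and $\eta=1$, so $f_t=tF$ is $t\beta$-strongly convex, $\beta_t=t\beta$, and~(\ref{eq:composite_bound}) collapses to $\sum_{t=1}^T\tfrac{\norm{\bloss'_t}_*^2}{2t\beta}$, which I bound using $\sum_{t=1}^T t^{-1}\le1+\ln T$.

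I expect the only delicate step to be the bookkeeping in the first part: handling the convention $f^*_0(\boldsymbol{0})=0$ at the boundary round $t=1$, and verifying that the $\eta tF$ component of the regularizer contributes exactly the per-round term $-\eta F(\bw_t)$ needed to promote $\sum_t\ell_t(\bw_t)$ to the composite quantity $\sum_t\bigl(\ell_t(\bw_t)+F(\bw_t)\bigr)$, while the monotone component $g_t$ telescopes harmlessly, leaving only $g_T(\bu)$ on the right. Once~(\ref{eq:composite_bound}) is in place, the remaining two bounds require nothing beyond the elementary estimates on $\sum_t t^{-1/2}$ and $\sum_t t^{-1}$.
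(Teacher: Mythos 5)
Your proposal is correct and follows essentially the same route as the paper: instantiate Lemma~\ref{lemma:general} with $\bz_t=-\eta\bloss'_t$, lower-bound the left side via the subgradient inequality, control the residues through~(\ref{eq:residue}) using the monotonicity of $g_t$ so that the $\eta tF$ part contributes exactly $-\eta F(\bw_t)$ per round, and then apply $\sum_t t^{-1/2}\le 2\sqrt{T}$ and $\sum_t t^{-1}\le 1+\ln T$ for the two specializations. Your explicit treatment of the $t=1$ boundary term (where $f^*_1(\boldsymbol{0})=-f_1(\bw_1)$ requires $g_1\ge 0$) is slightly more careful than the paper's, which folds it silently into the general telescoping, but the argument is the same.
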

\begin{proof}
By convexity, $\ell_t(\bw_t) - \ell_t(\bu) \leq \tfrac{1}{\eta} \langle \bz_t, \bu - \bw_t\rangle$. Using Lemma~\ref{lemma:general} we have,
\begin{align*}
    \sum_{t=1}^{T} &\langle \bz_t, \bu - \bw_t\rangle
\\ &\le
    g_T(\bu) + \eta T F(\bu) + \eta^2 \sum_{t=1}^{T} \frac{\dualnormt{\bloss'_t}^2}{2 \beta_t} + \eta \sum_{t=1}^{T} \bigl((t-1) F(\bw_t) - t F(\bw_{t})\bigr)
\end{align*}
where we used the fact that the terms $g_{t-1}(\bw_t) - g_{t}(\bw_{t})$ are nonpositive as per our assumption.
Reordering terms we obtain~(\ref{eq:composite_bound}). In order to obtain~(\ref
{eq:tuned-bound}) it is sufficient to note that, by definition of strong convexity, $g\sqrt{t}$ is $\beta\sqrt{t}$-strongly convex because $g$ is $\beta$-strongly convex, hence $f_t$ is $\beta\sqrt{t}$-strongly convex too. 
The elementary inequality
$
    \sum_{t=1}^T \frac{1}{\sqrt{t}} \le 2\sqrt{T}
$
concludes the proof of~(\ref{eq:tuned-bound}). Finally, bound~(\ref{eq:log-bound}) is proven by observing that $f_t = t\,F$ is $\beta t$-strongly convex because $F$ is $\beta$-strongly convex.
The elementary inequality $\sum_{t=1}^T \tfrac{1}{t} \le 1 + \ln T$ concludes the proof.
\end{proof}
Note that the regret bounds obtained in Corollary~\ref{cor:convex_loss} are for the composite setting, where the algorithm minimizes the sum $\ell_t(\cdot) + F(\cdot)$ of two functions, where the first one is typically a loss and the other is a regularization term. Here, the only hypothesis on $F$ is convexity, hence $F$ can be a nondifferentiable function, like $\norm{\cdot}_1$ for inducing sparse solutions. While the composite setting is considered more difficult than the standard one, and requires specific ad-hoc algorithms~\citep{DuchiS09,Xiao10,DuchiSST10}, here we show that this setting can be efficiently solved using OMD with a specific choice of the time-varying regularizer. Thus, we recover the results about minimization of strongly convex and composite loss functions, and adaptive learning rates, in a simple unified framework. Note that~\eqref{eq:residue}, which was missing in previous analyses of OMD, is the key result to obtain this level of generality. 

A special case of OMD is the Regularized Dual Averaging framework of \cite{Xiao10}, where the prediction at each step is defined by
\begin{align}
\bw_t = \argmin_{\bw} \frac{1}{t-1} \sum_{s=1}^{t-1} \bw^{\top}\bloss'_s + \frac{\beta_{t-1}}{t-1} g(\bw) + F(\bw)
\end{align}
for some $\bloss'_s\in\partial\ell_s(\bw_s)$, $s=1,\dots,t-1$.
Using~(\ref{eq:proj-prop}), it is easy to see that this update is equivalent\footnote{Although \cite{Xiao10} explicitly mentions that his results cannot be recovered with the primal-dual proofs, here we prove the contrary.} to
$\bw_t = \nabla f_t^*\left(\sum_{s=1}^{t-1} \bloss'_s\right)$,
where $f_t(\bw)=\beta_{t-1}\,g(\bw) + (t-1)\,F(\bw)$. The framework of \cite{Xiao10} has been extended by \cite{DuchiSST10} to allow the strongly convex part of the regularizer to increase over time. A bound similar to (\ref{eq:composite_bound}) has been also recently presented by \cite{DuchiSST10}. There, a more immediate trade-off between the current gradient and the Bregman divergence from the new solution to the previous one is used to update at each time step.
However, in both cases their analysis is not flexible enough to include algorithms whose update does not use the sub-gradient of the loss function. Examples of such algorithms are the Vovk-Azoury-Warmuth algorithm of the next section and the online binary classification algorithms of Section~\ref{sec-class}.

\section{Online regression with square loss}
In this section we apply Lemma~\ref{lemma:general} to recover known regret bounds for online regression and adaptive filtering with the square loss. For simplicity, we set $\fX=\R^d$ and let the inner product $\inner{\bu,\bx}$ be the standard dot product $\bu^{\top}\bx$. We also set $\ell_t(\bu) = \tfrac{1}{2}\bigl(y_t - \bu^{\top}\bx_t\bigr)^2$ where $(\bx_1,y_1),(\bx_2,y_2),\dots$ is some arbitrary sequence of examples $(\bx_t,y_t)\in\R^d\times\R$.

First, note that it is possible to specialize OMD to the Vovk-Azoury-Warmuth algorithm for online regression. Remember that, at each time step $t$, the Vovk-Azoury-Warmuth algorithm predicts with
\begin{align*}
\bw_t 
&= \argmin_{\bw} \ \frac{a}{2}\|\bw\|^2 + \frac{1}{2}\sum_{s=1}^{t-1} \bigl(y_s - \bw^{\top}\bx_s\bigr)^2 + \frac{1}{2}\bigl(\bw^{\top}\bx_t\bigr)^2 \\
&= \argmin_{\bw} \ \frac{a}{2} \|\bw\|^2  + \frac{1}{2} \sum_{s=1}^{t} (\bw^{\top}\bx_s)^2 - \sum_{s=1}^{t-1} y_s \bw^{\top}\bx_s  \\
&= \argmin_{\bw} \ \frac{1}{2} \bw^\top \left(a I + \sum_{i=1}^{t} \bx_s \bx_s^\top\right) \bw - \sum_{s=1}^{t-1} y_s \bw^{\top}\bx_s  \\
&= \left(a I + \sum_{s=1}^{t} \bx_s \bx_s^\top\right)^{-1} \sum_{i=1}^{t-1} y_s \bx_s~.
\end{align*}
Now, by letting $A_0=a\,I_d$, $A_t=A_{t-1} + \bx_t \bx_t^\top$ for $t \geq 1$, and $\bz_s = y_s \bx_s$, we obtain the OMD update
$
    \bw_t = A_t^{-1} \btheta_t = \nabla f_t^*(\btheta_t)
$,
where $f_t(\bu)=\tfrac{1}{2}\bu^\top A_t \bu$ and $f_t^*(\btheta) = \tfrac{1}{2}\btheta^\top A^{-1}_t \btheta$. Note that $\bz_t$ is \emph{not} equal to the negative gradient of the square loss. In fact, the special structure of the square loss allows us to move some of the terms inside the regularizer, and use proxies for the gradients of the losses.
The regret bound of this algorithm ---see, e.g., \cite[Theorem~11.8]{Cesa-BianchiL06}--- is recovered from Lemma~\ref{lemma:general} by noting that $f_t$ is $1$-strongly convex with respect to the norm $\normt{\bu} = \sqrt{\bu^\top A_t \bu}$. Hence, the regret~(\ref{eq:regret}) is controlled as follows
\begin{align*}
R_T(\bu)
&= \frac{1}{2}\sum_{t=1}^T (y_t - \bw_t^\top \bx_t)^2 - \frac{1}{2}\sum_{t=1}^T (y_t - \bu^\top \bx_t)^2 \\
&= \sum_{t=1}^T (y_t \bu^\top \bx_t - y_t \bw_t^\top \bx_t) - f_T(\bu) + \frac{a}{2}\norm{\bu}^2 + \frac{1}{2} \sum_{t=1}^T (\bw^\top_t \bx_t)^2
\\ &\le
    f_T(\bu) + \sum_{t=1}^{T} \left(\frac{y_t^2\dualnormt{\bx_t}^2}{2} + f^*_t(\btheta_t)-f^*_{t-1}(\btheta_{t}) \right) - f_T(\bu) + \frac{a}{2}\norm{\bu}^2 \\
&\quad + \frac{1}{2} \sum_{t=1}^T (\bw^\top_t \bx_t)^2
\\ &\le
    \frac{a}{2}\norm{\bu}^2 + \frac{Y^2}{2}\sum_{t=1}^{T} \bx^\top_t A^{-1}_{t} \bx_t
\end{align*}
since
$
    f^*_t(\btheta_t)-f^*_{t-1}(\btheta_{t}) \le f_{t-1}(\bw_t) - f_t(\bw_t) = - \tfrac{1}{2}(\bw^\top_t \bx_t)^2
$,
and where $Y = \max_t |y_t|$.
%

A related setting is that of adaptive filtering ---see, e.g., \citep{KivinenWH06} and references therein. In this setting the output signals $y_t$ are given by
$
  y_t = \bu^\top \bx_t + \nu_t
$,
where $\bu$ is unknown and $\nu_t$ is some arbitrary noise process. The goal is to recover the uncorrupted output $\bu^\top\bx_t$. This can be achieved by minimizing a suitable notion of regret, namely the adaptive filtering regret w.r.t.~the square loss,
\[
    R^{\textsc{af}}_T(\bu) := \sum_{t=1}^T \bigl(\bw_t^\top \bx_t - \bu^\top \bx_t\bigr)^2~.
\]
\cite{KivinenWH06} introduced the $p$-norm LMS algorithm, addressing adaptive filtering problems in a purely online nonostochastic setting. We now show that OMD can be easily applied to online adaptive filtering. First, note that
\begin{align}
\nonumber
    R_T(\bu) +\frac{1}{2}R^{\textsc{af}}_T(\bu)
&=
    \sum_{t=1}^T \Bigl( \bigl(y_t- \bw_t^\top \bx_t\bigr)^2 - \bigl(y_t- \bu^\top \bx_t\bigr)^2 + \frac{1}{2}\bigl(\bw_t^\top \bx_t - \bu^\top \bx_t\bigr)^2 \Bigr) \\
\nonumber
&=
    \sum_{t=1}^T \Bigl( \bigl(y_t- \bw_t^\top \bx_t\bigr) \bu^\top \bx_t - \bigl(y_t- \bw_t^\top \bx_t\bigr) \bw_t^\top \bx_t\Bigr) \\
\label{eq:filtering1}
&=
    \sum_{t=1}^T \bigl(\bu - \bw_t\bigr)^{\top}\bz_t
\end{align}
where we set $\bz_t= - \bigl(y_t- \bw_t^\top \bx_t\bigr) \bx_t$. Now, pick any function $f$ which is $1$-strongly convex with respect to some norm $\norm{\cdot}$ and let $f_t(\bu)= X^2_t f(\bu)$, where $X_t = \max_{s \leq t}\norm{\bx_s}_*$. Lemma~\ref{lemma:general} then immediately implies that
\begin{align}
\label{eq:filtering2}
    \sum_{t=1}^T \bigl(\bu - \bw_t\bigr)^{\top}\bz_t
&\le
    f_T(\bu) + \frac{1}{2}\sum_{t=1}^{T} \bigl(y_t- \bw_t^\top \bx_t\bigr)^2
\end{align}
where we used the $X^2_t$-strong convexity of $f_t$ and the fact the $f_{t} \geq f_{t-1}$.
Combining~(\ref{eq:filtering1}) with~(\ref{eq:filtering2}) and simplifying, we obtain the following adaptive filtering bound
\begin{align*}
    R^{\textsc{af}}_T(\bu)
\leq
    2 X^2_T f(\bu)  + \sum_{t=1}^T \bigl(y_t - \bu^\top \bx_t\bigr)^2~.
\end{align*}
This is a direct generalization to arbitrary regularizers $f$ of the bound by \cite{KivinenWH06}. However, their algorithm requires the prior knowledge of the maximum norm of $\bx_t$ to set a critical parameter. Instead, our algorithm, through the use of an increasing regularizer, has the ability to adapt to the maximum norm of $\bx_t$, without using any prior knowledge.

\section{Scale-invariant algorithms}
\label{sec-new}
In this section we show the full power of our framework by introducing two new scale-invariant algorithms for online linear regression with an arbitrary convex and Lipschitz loss function.

Recall the online linear regression setting: given a convex loss $\ell : \R^2\to\R$ and a fixed but unknown sequence $(\bx_1,y_1),(\bx_2,y_2),\dots$ of examples, at each step $t=1,2,\dots$ the online algorithm observes $\bx_t$ and picks $\bw_t$. The associated loss is $\ell_t(\bw_t) = \ell\bigl(\bw_t^{\top}\bx_t,y_t\bigr)$.

Let $\bu\in\R^d$ be any fixed predictor with small total loss $\ell_1(\bu)+\cdots+\ell_T(\bu)$.
Because of linearity, an arbitrary rescaling of any individual feature, $x_{t,i} \to c\,x_{t,i}$ for $t=1,\dots,T$ (while $y_1,\dots,y_T$ are kept fixed) can be offset by a corresponding rescaling of $u_i$ without affecting the total loss. We might ask a similar scale-invariance for the online predictor. In other words, we would like the online algorithm to be independent of the units in which each data coordinate is expressed.

We now introduce two new time-varying regularizers that achieve this goal. As in the previous section, let $\fX=\R^d$ and let the inner product $\inner{\bu,\bx}$ be the standard dot product $\bu^{\top}\bx$.

The new regularizers are based on the following generalization of the squared $q$-norm.
Given $(a_1,\dots,a_d) \in \R_+$ and $q \in (1,2]$ define the weighted $q$-norm of $\bw \in \R^d$ by
\[
    \left( \sum_{i=1}^d \vert w_i\vert^q a_i \right)^{1/q}~.
\]
Define the corresponding regularization function by
\[
f(\bw)=\frac{1}{2(q-1)}\left( \sum_{i=1}^d \vert w_i\vert^q a_i \right)^{2/q}~.
\]
This function has the following properties (proof in the Appendix).
\begin{lemma}
\label{l:weighted-q}
The Fenchel conjugate of $f$ is
\begin{equation}
f^*(\btheta)=\frac{1}{2(p-1)} \left( \sum_{i=1}^d \vert \theta_i\vert^p \, a_i^{1-p} \right)^{2/p}
\qquad \text{for}\quad p = \frac{q}{q-1}~.
\end{equation}
Moreover, the function $f(\bw)$ is $1$-strictly convex with respect to the norm
\[
\left( \sum_{i=1}^d \vert x_i \vert^q a_i \right)^{1/q}
\]
whose dual norm is defined by
\[
\left( \sum_{i=1}^d \vert \theta_i \vert^p a_i^{1-p} \right)^{1/p}~.
\]
\end{lemma}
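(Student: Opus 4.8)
The plan is to reduce the whole statement to the classical facts about the unweighted squared $q$-norm by means of a diagonal change of variables. Since the $a_i$ are strictly positive, the matrix $D = \diag{a_1^{1/q},\dots,a_d^{1/q}}$ is well-defined, symmetric, and invertible, and for every $\bw$ we have $\sum_{i=1}^d |w_i|^q a_i = \sum_{i=1}^d \bigl|(D\bw)_i\bigr|^q = \norm{D\bw}_q^q$. Hence $f(\bw) = \psi(D\bw)$, where $\psi(\bv) = \frac{1}{2(q-1)}\norm{\bv}_q^2$. I will use the classical facts (see the references on $p$-norm algorithms cited earlier) that, for $q\in(1,2]$, the function $\psi$ is $1$-strongly convex with respect to $\norm{\cdot}_q$, and that its Fenchel conjugate is the map $\bv\mapsto\frac{1}{2(p-1)}\norm{\bv}_p^2$ with $1/p+1/q=1$ (one checks $(q-1)(p-1)=1$, so the constants match).

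For the conjugate I would change variables $\bv=D\bw$ in the definition of $f^*$: since $D$ is symmetric and invertible,
\[
f^*(\btheta) \;=\; \sup_{\bw}\paren{\inn{\btheta}{\bw} - \psi(D\bw)} \;=\; \sup_{\bv}\paren{\inn{D^{-1}\btheta}{\bv} - \psi(\bv)} \;=\; \psi^*\!\paren{D^{-1}\btheta} \;=\; \frac{1}{2(p-1)}\norm{D^{-1}\btheta}_p^2~.
\]
Expanding $\norm{D^{-1}\btheta}_p^p = \sum_{i=1}^d a_i^{-p/q}|\theta_i|^p$ and using the elementary identity $p/q = p-1$ (equivalently $-p/q = 1-p$), which follows from $p = q/(q-1)$, gives exactly the claimed formula for $f^*$. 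The dual-norm claim follows from the same substitution: $\sup\theset{\inn{\btheta}{\bw}}{\norm{D\bw}_q\le 1} = \sup\theset{\inn{D^{-1}\btheta}{\bv}}{\norm{\bv}_q\le 1} = \norm{D^{-1}\btheta}_p = \bigl(\sum_{i=1}^d a_i^{1-p}|\theta_i|^p\bigr)^{1/p}$, since the dual of $\norm{\cdot}_q$ is $\norm{\cdot}_p$ and, again, $-p/q = 1-p$.

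For strong convexity I would combine the subgradient definition with the chain rule $\partial f(\bw) = D\,\partial\psi(D\bw)$. An arbitrary subgradient of $f$ at $\bw$ has the form $D\bv'$ with $\bv'\in\partial\psi(D\bw)$; applying $1$-strong convexity of $\psi$ at the points $D\bw$ and $D\bw''$, and re-expressing both the inner product and the quadratic term through $\bw,\bw''$ via the symmetry of $D$, yields
\[
f(\bw'')\;\ge\; f(\bw) + \inn{D\bv'}{\bw''-\bw} + \tfrac12\norm{D(\bw-\bw'')}_q^2~,
\]
and $\norm{D(\bw-\bw'')}_q = \bigl(\sum_{i=1}^d |w_i - w_i''|^q a_i\bigr)^{1/q}$ is precisely the weighted $q$-norm in the statement; this is exactly $1$-strong convexity of $f$ with respect to that norm. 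No step is a genuine obstacle; the only points requiring care are (i) invoking the correct normalization so that the strong-convexity constant comes out exactly $1$ — i.e.\ that $\tfrac12\norm{\cdot}_q^2$ is $(q-1)$-strongly convex w.r.t.\ $\norm{\cdot}_q$ for $q\in(1,2]$ — and (ii) the exponent bookkeeping turning $a_i^{-p/q}$ into $a_i^{1-p}$ consistently in both the conjugate and the dual norm. (I would also read the statement's ``$1$-strictly convex'' as ``$1$-strongly convex'', which is what the reduction delivers and what the applications require.)
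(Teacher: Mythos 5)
Your proof is correct and follows essentially the same route as the paper's: both reduce the weighted case to the classical unweighted $q$-norm facts via the diagonal change of variables $\bw \mapsto \diag{a_1^{1/q},\dots,a_d^{1/q}}\bw$, for the conjugate, the strong convexity, and the dual norm alike. The only cosmetic differences are that you re-derive the conjugate-of-a-composition identity by direct substitution where the paper cites \citep[Proposition 13.20 IV]{BauschkeC11}, and you transport strong convexity through the subgradient definition where the paper uses the Hessian criterion of \citep[Lemmas 14 and 17]{Shalev-Shwartz07}.
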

Given an arbitrary sequence $(\bx_1,y_1),(\bx_2,y_2),\dots$ of examples, we assume OMD is run with $\bz_t = -\eta\bloss'_t$ where, as usual, $\bloss'_t \in \partial\ell_t(\bw_t)$. We also assume the loss $\ell$ is such that $\ell(\cdot,y)$ is $L$-Lipschitz for each $y\in\R$ and $\ell_t(\bw) = \ell\bigl(\bw^{\top}\bx_t,y_t\bigr)$ is convex for all $t$. 
In the rest of this section, the following notation is used: ${\displaystyle b_{t,i} = \max_{s=1,\dots,t} |x_{s,i}|,\, m_t =\max_{s=1,\dots,t} \norm{\bx_s}_0,\, p_t = 2 \ln m_t }$, and
\[
  \beta_t = \sqrt{ e L^2 (p_t-1) + \sum_{s=1}^{t-1} (p_s-1) \left(\sum_{i=1}^d \left( \frac{|\ell'_{s,i}|}{b_{s,i}} \right)^{p_s} \right)^{2/{p_s}} }~.
\]
The time-varying regularizers we consider are defined as follows,
\begin{align}
\label{eq:f1}
  f_t(\bu) &= \frac{\beta_t}{2}\left( \sum_{i=1}^d \bigl(|u_i| b_{t,i}\bigr)^{q_t} \right)^{2/q_t} \qquad \text{for}\quad q_t = \frac{p_t}{p_t-1}
\\
\label{eq:f2}
  f_t(\bu) &= \frac{\sqrt{d}}{2}\left( \sum_{i=1}^d \bigl(|u_i| b_{t,i}\bigr)^2 \sqrt{L^2+\sum_{s=1}^{t-1} \left(\frac{\ell'_{s,i}}{b_{s,i}}\right)^2} \right)~.
\end{align}
As we show next, regularizers of type~(\ref{eq:f1}) give regret bounds that exhibit a logarithmic dependency on the maximum number of non-zero components observed. Instead, regularizers of type~(\ref{eq:f2}) give bounds that depend on $\sqrt{d}$, and used a different learning rate for each coordinate. Roughly speaking, the first regularizer provides scale-invariance with a logarithmic dependency on the dimension $d$ obtained through the $p$-norm regularization~\citep{Gentile03}. The second regularizer, instead, corresponds to a scale invariant version of AdaGrad~\citep{DuchiHS10}.

In order to spell out the OMD update, we compute the derivative of the Fenchel dual of the regularization functions. Using the fact that if $g(\bw)=a f(\bw)$, then $g^*(\btheta)=a f^*(\frac{\theta}{a})$, for regularizers of type~(\ref{eq:f1}) we have
\begin{align}
\bigl(\nabla f^*_t(\btheta)\bigr)_j =  \frac{1}{\beta_t (p_t-1)} \left( \sum_{i=1}^d \left(\frac{|\theta_i|}{b_{t,i}}\right)^{p_t} \right)^{2/p_t-1} \frac{|\theta_j|^{p_t-1}}{b^{p_t}_{t,j}} \sign(\theta_j)~.
\label{eq:update1}
\end{align}
For regularizers of type~(\ref{eq:f2}) we have
\begin{align}
\bigl(\nabla f^*_t(\btheta)\bigr)_j = \frac{\theta_j}{ b^2_{t,j} \sqrt{d} \sqrt{L^2+\sum_{s=1}^{t-1} \left(\frac{\ell'_{s,j}}{b_{s,j}}\right)^2}}~. \label{eq:update2}
\end{align}
These updates are such that $\bw_t^{\top}\bx_t$ is independent of arbitrary rescalings of individual features. To see this, recall that $\btheta_t = -\eta\bigl(\bloss'_1+\cdots+\bloss'_{t-1}\bigr)$ and 
\[
    \bloss'_s = \left.\frac{\partial\ell(z,y_s)}{\partial z}\right|_{z=\bw_s^{\top}\bx_s}\bx_s
\]
where the partial derivative is at most $L$ by Lipschitzness of $\ell$. Hence, the ratios $|\ell'_{t,j}|\big/b_{t,j}$ and $|\theta_{t,j}|\big/b_{t,j}$ are invariant with respect to arbitrary rescalings of the $j$-th feature. So, in both \eqref{eq:update1} and \eqref{eq:update2}, $w_{t,j}$ scales as $1/b_{t,j}$, and we have that $\bw_t^{\top}\bx_t$ is invariant to the rescaling of individual coordinates.

The formula in the right-hand side of~\eqref{eq:update2} also shows that, similar to the updates studied in \citep{McMahanS10,DuchiHS10}, the second type of regularizer induces a different learning rate for each component of $\bw_t$.

We now prove the following regret bounds.
\begin{theorem}
\label{theo:newbound}
If OMD is run using regularizers of type~(\ref{eq:f1}), then for any $\bu \in \R^d$
\begin{align*}
    R_T(\bu)
\leq
    L \sqrt{e (T+1) (2 \ln m_T -1)}\left( \frac{1}{2 \eta}\left(\sum_{i=1}^d |u_i| b_{T,i} \right)^2 + \eta \right)~.
\end{align*}
If OMD is run using regularizers of type~(\ref{eq:f2}), then for any $\bu \in \R^d$
\begin{align*}
    R_T(\bu)
\le
    L \sqrt{d(T+1)} \left(\frac{1}{2 \eta}\sum_{i=1}^d \bigl(u_i b_{T,i}\bigr)^2 + \eta \right)~.
\end{align*}
\end{theorem}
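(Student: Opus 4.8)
Both inequalities will be obtained by instantiating Lemma~\ref{lemma:general} exactly as in the proof of Corollary~\ref{cor:convex_loss}. Since $\bz_t=-\eta\bloss'_t$ with $\bloss'_t\in\partial\ell_t(\bw_t)$, convexity gives $\ell_t(\bw_t)-\ell_t(\bu)\le\tfrac1\eta\langle\bz_t,\bu-\bw_t\rangle$, so $R_T(\bu)\le\tfrac1\eta\sum_{t=1}^T\langle\bz_t,\bu-\bw_t\rangle$, and I would bound the right-hand side with Lemma~\ref{lemma:general}. For this I first need the strong-convexity constant $\beta^{\mathrm{sc}}_t$ and the dual norm of each $f_t$. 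For regularizers of type~\eqref{eq:f1}, Lemma~\ref{l:weighted-q} with weights $a_i=b_{t,i}^{q_t}$ gives that $f_t$ is $\tfrac{\beta_t}{p_t-1}$-strongly convex w.r.t.\ $\normt{\bx}=\bigl(\sum_i(|x_i|b_{t,i})^{q_t}\bigr)^{1/q_t}$, with dual norm $\dualnormt{\btheta}=\bigl(\sum_i(|\theta_i|/b_{t,i})^{p_t}\bigr)^{1/p_t}$ (using $q_t/p_t=q_t-1$); for type~\eqref{eq:f2}, $f_t$ is a weighted squared Euclidean norm, hence $1$-strongly convex w.r.t.\ the weighted $2$-norm $\normt{\bx}^2=\sqrt d\sum_i b_{t,i}^2x_i^2\sqrt{L^2+\sum_{s<t}(\ell'_{s,i}/b_{s,i})^2}$, with the obvious dual norm. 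Plugging $\dualnormt{\bz_t}^2=\eta^2\dualnormt{\bloss'_t}^2$ into Lemma~\ref{lemma:general} reduces everything to estimating $f_T(\bu)$, the dual-norm sum, and the ``residue'' terms.

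Next I would dispose of the residue terms $f^*_t(\btheta_t)-f^*_{t-1}(\btheta_t)$: by~\eqref{eq:residue} each is $\le f_{t-1}(\bw_t)-f_t(\bw_t)$, so it suffices to show $f_{t-1}(\bu)\le f_t(\bu)$ for all $\bu\in\R^d$, whence they all vanish. For~\eqref{eq:f2} this is immediate since $b_{t,i}\ge b_{t-1,i}$ and the sum under the square root acquires one more nonnegative term. For~\eqref{eq:f1} this is the delicate point, because the exponent $q_t$ itself varies: I would chain three monotonicities --- $p_t$ nondecreasing (so $q_t$ nonincreasing, hence $\|\cdot\|_{q_t}$ pointwise nondecreasing on nonnegative vectors), $b_{t,i}$ nondecreasing, and $\beta_t$ nondecreasing (since $p_t$ is nondecreasing and the summed terms defining $\beta_t$ are nonnegative).

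It then remains to bound $f_T(\bu)$ and $\sum_{t=1}^T\dualnormt{\bz_t}^2\big/(2\beta^{\mathrm{sc}}_t)$. The workhorse estimate is that $|\ell'_{s,i}|/b_{s,i}\le L$ (by $L$-Lipschitzness of $\ell$ and $b_{s,i}\ge|x_{s,i}|$) and that $\sum_i(|\ell'_{s,i}|/b_{s,i})^{p_s}$ has at most $m_s$ nonzero terms; combined with $p_s=2\ln m_s$ this yields $\bigl(\sum_i(|\ell'_{s,i}|/b_{s,i})^{p_s}\bigr)^{2/p_s}\le e L^2$, so each summand $D_s:=(p_s-1)\bigl(\sum_i(|\ell'_{s,i}|/b_{s,i})^{p_s}\bigr)^{2/p_s}$ satisfies $D_s\le eL^2(p_s-1)$. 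From $\beta_t^2=eL^2(p_t-1)+\sum_{s<t}D_s$ I then get, on one hand, $\beta_T\le L\sqrt{eT(p_T-1)}\le L\sqrt{e(T+1)(2\ln m_T-1)}$, which controls $f_T(\bu)\le\tfrac{\beta_T}{2}\bigl(\sum_i|u_i|b_{T,i}\bigr)^2$ after using $\|v\|_{q_T}\le\|v\|_1$; on the other hand, $\beta_t^2\ge D_t+\sum_{s<t}D_s=\sum_{s\le t}D_s$, so by the elementary inequality $\sum_t a_t\big/\sqrt{\sum_{s\le t}a_s}\le2\sqrt{\sum_t a_t}$ one gets $\sum_t\dualnormt{\bz_t}^2\big/(2\beta^{\mathrm{sc}}_t)=\tfrac{\eta^2}{2}\sum_t D_t/\beta_t\le\eta^2 \sqrt{\sum_t D_t}\le\eta^2 L\sqrt{e(T+1)(2\ln m_T-1)}$. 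Dividing by $\eta$ and adding the two contributions gives the first bound. The second is analogous but simpler: $f_T(\bu)\le\tfrac{\sqrt d}{2}L\sqrt T\sum_i(u_ib_{T,i})^2$, and applying the same elementary inequality coordinatewise (using $L^2+\sum_{s<t}r_{s,i}^2\ge\sum_{s\le t}r_{s,i}^2$ since $r_{t,i}^2:=(\ell'_{t,i}/b_{t,i})^2\le L^2$) gives $\tfrac12\sum_t\dualnormt{\bz_t}^2\le\tfrac{\eta^2}{2\sqrt d}\cdot 2dL\sqrt T=\eta^2L\sqrt{dT}$, and $\sqrt{dT}\le\sqrt{d(T+1)}$ finishes it.

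\textbf{Main obstacle.} The crux is the bookkeeping around $\beta_t$ in the type-\eqref{eq:f1} case: one must simultaneously establish the \emph{upper} bound $\beta_T\le L\sqrt{eT(p_T-1)}$ (needed to control $f_T(\bu)$) and the \emph{lower} bound $\beta_t^2\ge\sum_{s\le t}D_s$ (needed to telescope the gradient terms), both resting on the estimate $\bigl(\sum_i(|\ell'_{s,i}|/b_{s,i})^{p_s}\bigr)^{2/p_s}\le eL^2$ and the precise choice $p_s=2\ln m_s$; and one must prove the monotonicity $f_{t-1}\le f_t$, which is genuinely delicate only because the exponent $q_t$ in the regularizer changes from round to round.
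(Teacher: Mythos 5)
Your proposal is correct and follows essentially the same route as the paper's proof: the convexity reduction to Lemma~\ref{lemma:general}, the residue terms killed via~\eqref{eq:residue} together with the monotonicity $f_{t-1}\le f_t$ (decreasing $q_t$, increasing $b_{t,i}$ and $\beta_t$), the key estimate $m_s^{2/p_s}=e$ giving $D_s\le eL^2(p_s-1)$ and hence $\beta_t^2\ge\sum_{s\le t}D_s$, and the elementary inequality~\eqref{eq:auer} to telescope the gradient terms (coordinatewise in the type-\eqref{eq:f2} case). The only differences are cosmetic, e.g.\ you bound $\sqrt{\sum_t D_t}$ directly where the paper routes through $\beta_{T+1}$.
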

\begin{proof}
For the first algorithm, note that $m^{{2}/{p_t}}_t = e$, and setting $q_t = \left(1-\frac{1}{p_t}\right)^{-1}$, we have $q_t (1-p_t)=-p_t$. Further note that
$
    f^*_t(\btheta_t)-f^*_{t-1}(\btheta_{t})\leq f_{t-1}(\bw_t)-f_t(\bw_t) \le 0
$,
where $f_{t-1} \le f_t$ because $q_t$ is decreasing, $b_{t,i}$ is increasing, and $\beta_t$ is also increasing. Hence, using the convexity of $\ell_t$ and Lemma~\ref{lemma:general}, we may write
\begin{align}
    R_T(\bu)
&\leq
    \sum_{t=1}^T (\bloss'_t)^{\top}\bigl(\bu-\bw_t\bigr) \notag 
\\ &\leq
    \frac{\beta_T}{2 \eta}\left( \sum_{i=1}^d \bigl(|u_i| b_{T,i}\bigr)^{q_T} \right)^{2/q_T} + \eta \sum_{t=1}^{T} \frac{1}{2 \beta_t (q_t-1)} \left(\sum_{i=1}^{d} \frac{ |\ell'_{t,i}|^{p_t} }{b^{p_t}_{t,i}} \right)^{{2}/{p_t}}~. \label{eq:newbound1}
\end{align}
For the second term in~\eqref{eq:newbound1}, using the fact that $\frac{1}{q_t-1}=p_t-1$ and the $L$-Lipschitzness, we have
\begin{align*}
    \sum_{t=1}^{T} &\frac{1}{2 \beta_t (q_t-1)} \left(\sum_{i=1}^{d} \frac{ |\ell'_{t,i}|^{p_t} }{b^{p_t}_{t,i}} \right)^{{2}/{p_t}}
\\&=
    \frac{1}{2}\sum_{t=1}^{T} \frac{(p_t-1) \left(\sum_{i=1}^{d} \frac{ |\ell'_{t,i}|^{p_t} }{b^{p_t}_{t,i}} \right)^{{2}/{p_t}}}{\sqrt{ L^2 m_t^{{2}/{p_t}} (p_t-1)+\sum_{s=1}^{t-1} (p_s-1)\left(\sum_{i=1}^d \frac{|\ell'_{s,i}|^{p_s}}{b^{p_s}_{s,i}}\right)^{{2}/{p_s}} }}
\\ &\leq
    \frac{1}{2}\sum_{t=1}^{T} \frac{(p_t-1) \left(\sum_{i=1}^{d} \frac{ |\ell_{t,i}|^{p_t} }{b^{p_t}_{t,i}} \right)^{{2}/{p_t}}}{\sqrt{ \sum_{s=1}^{t} (p_s-1)\left(\sum_{i=1}^d \frac{|\ell'_{s,i}|^{p_s}}{b^{p_s}_{s,i}}\right)^{{2}/{p_s}} }}
\\ &\leq
    \sqrt{\sum_{t=1}^{T} (p_t-1) \left(\sum_{i=1}^{d} \frac{ |\ell'_{t,i}|^{p_t} }{b^{p_t}_{t,i}} \right)^\frac{2}{p_t} }
\\ &\leq
    \beta_{T+1}
\end{align*}
where the second inequality uses the elementary inequality
\begin{equation}
\label{eq:auer}
    \sum_{t=1}^T \frac{a_t}{\sqrt{\sum_{s=1}^t a_s}} \le 2\sqrt{\sum_{t=1}^T a_t} \qquad \text{for $a_1,\dots,a_T \ge 0$}
\end{equation}
(see, e.g., \cite[Lemma~3.5]{AuerCG02}). Hence we have
\begin{align*}
R_T(\bu) \leq \beta_{T+1} \left(  \frac{\left(\sum_{i=1}^d |u_i| b_{T,i} \right)^2}{2 \eta} + \eta \right)~.
\end{align*}
Finally, note that
\[
\beta_{T+1} =\sqrt{ e L^2 p_T +\sum_{s=1}^T (p_s-1) \left(\sum_{i=1}^d \left(\frac{|\ell'_{s,i}|}{b_{s,i}}\right)^{p_s}\right)^{{2}/{p_s}} } \le  L \sqrt{e (T+1) (2 \ln m_T -1)}~.
\]
The proof of the second bound is similar. First note that
$
    f^*_t(\btheta_t)-f^*_{t-1}(\btheta_{t})\leq f_{t-1}(\bw_t)-f_t(\bw_t) \le 0
$,
where $f_{t-1} \le f_t$ is easily verified by inspection of~(\ref{eq:f2}). Using the convexity of $\ell_t$ and Lemma~\ref{lemma:general} we then obtain
\begin{align*}
    R_T(\bu)
&\leq
    \sum_{t=1}^T (\bloss'_t)^{\top}\bigl(\bu-\bw_t\bigr) \notag 
\\ &\leq
    \frac{\sqrt{d}}{2 \eta}\left( \sum_{i=1}^d \bigl(u_i b_{T,i}\bigr)^2 \sqrt{L^2+\sum_{s=1}^{T-1} \left(\frac{\ell'_{s,i}}{b_{s,i}}\right)^2} \right) 
\\ &\quad +
    \frac{\eta}{2\sqrt{d}} \sum_{t=1}^{T} \left(\sum_{i=1}^{d} \left(\frac{\ell'_{t,i}}{b_{t,i}}\right)^2 \frac{1}{\sqrt{L^2+\sum_{s=1}^{t-1} \left(\frac{\ell'_{s,i}}{b_{s,i}}\right)^2}} \right)~.
\end{align*}
For the second term, we have
\begin{align*}
    \frac{\eta}{2\sqrt{d}} \sum_{t=1}^{T} \left(\sum_{i=1}^{d} \left(\frac{\ell'_{t,i}}{b_{t,i}}\right)^2 \frac{1}{\sqrt{L^2+\sum_{s=1}^{t-1} \left(\frac{\ell'_{s,i}}{b_{s,i}}\right)^2}} \right)
&\leq
    \frac{\eta}{2\sqrt{d}}\sum_{t=1}^{T} \sum_{i=1}^{d} \frac{ \bigl(\ell'_{t,i}\big/b_{t,i}\bigr)^2 }{\sqrt{\sum_{s=1}^{t} \left(\frac{\ell'_{s,i}}{b_{s,i}}\right)^2 }}
\\ &\leq
    \frac{\eta}{\sqrt{d}} \sum_{i=1}^{d} \sqrt{\sum_{t=1}^{T} \left(\frac{\ell'_{t,i}}{b_{t,i}}\right)^2 }
\end{align*}
where the last inequality uses~(\ref{eq:auer}). The proof is finished by noting that
\begin{align}
\nonumber
    &\frac{\sqrt{d}}{2 \eta}\left( \sum_{i=1}^d \bigl(u_i b_{T,i}\bigr)^2 \sqrt{L^2+\sum_{t=1}^{T-1} \left(\frac{\ell'_{t,i}}{b_{t,i}}\right)^2} \right) + \frac{\eta}{\sqrt{d}} \sum_{i=1}^{d} \sqrt{\sum_{t=1}^{T} \left(\frac{\ell'_{t,i}}{b_{t,i}}\right)^2}
\\ &\quad \le
\label{eq:tocompare}
    \sum_{i=1}^d \left( \sqrt{L^2+\sum_{t=1}^T \left(\frac{\ell'_{t,i}}{b_{t,i}}\right)^2 } \left(\frac{\sqrt{d}}{2\eta} \bigl(u_i b_{T,i}\bigr)^2 + \frac{\eta}{\sqrt{d}} \right)  \right)
\\ &\quad \le
\nonumber
    L \sqrt{d(T+1)} \left(\frac{1}{2 \eta}\sum_{i=1}^d \bigl(u_i b_{T,i}\bigr)^2 + \eta \right)~.
\end{align}
\end{proof}
Note that both bounds are invariant with respect to arbitrary scaling of individual coordinates of the data points $\bx_t$ in the following sense: if the $i$-th feature is rescaled $x_{t,i} \to c\,x_{t,i}$ for all $t$, then a corresponding rescaling $u_i \to u_i/c$, leaves the bounds unchanged.

This invariance property is not exhibited by standard OMD run with non-adaptive regularizers, whose regret bounds are of the form
$
    \norm{\bu}\max_t\norm{\bx_t}_*\sqrt{T}
$.
In particular, by an appropriate tuning of $\eta$ the regret in Corollary~\ref{cor:convex_loss} for the regularizer type~(\ref{eq:f1}) is bounded by a quantity of the order of
\[
    \left(\sum_{i=1}^d |u_i| \max_t |x_{t,i}|\right) \sqrt{T\ln d}~.
\]
When the good $\bu$ are sparse, implying that the norms $\norm{\bu}_1$ are small, this is always better than running standard OMD with a non-weighted $q$-norm regularizer. For $q \to 1$ (the best choice for the sparse $\bu$ case), this gives bounds of the form
\[
    \left(\norm{\bu}_1 \max_t\norm{\bx_t}_{\infty}\right)\sqrt{T\ln d}~.
\]
Indeed, for regularizer~(\ref{eq:f1}), we have
\[
    \left(\sum_{i=1}^d |u_i| \max_t |x_{t,i}|\right)
\le
    \left(\sum_{i=1}^d |u_i| \max_t\max_j |x_{t,j}|\right)
=
    \norm{\bu}_1 \max_t\norm{\bx_t}_{\infty}~.
\]
Similar regularization functions are studied by~\cite{GOB11} although in a different context.

Recently, a framework for studying scale-invariance in online algorithms has been proposed by~\cite{RML13}. In the variant of their setting closest to our model, the sequence of instances $\bx_t\in\R^d$ is such that there exists an unknown diagonal matrix $S$ for which $\norm{S^{1/2}\bx_t}_{\infty} \le 1$ for all $t$. The algorithm they propose is a form of projected gradient descent with a diagonal update (see Subsection~\ref{sec:diagonal} for an explanation of diagonal updates), where adaptivity is achieved by means of a variable learning rate rather than a variable regularizer. Their algorithm achieves a regret bound of the form
\begin{equation}
\label{eq:lang}
    R_T(\bu)
\le
    \frac{C}{2\sqrt{2}}\sum_{i=1}^d\frac{1 + 6\Delta_i + \Delta_i^2}{b_{T,i}}\sqrt{\sum_{t=1}^T \bigl(\ell'_{t,i}\bigr)^2}
\end{equation}
for any $\bu\in\R^d$ such that ${\displaystyle \max_{t=1,\dots,T} \bigl|\bu^{\top}\bx_t\bigr| \le C}$, where $C$ is a parameter used by the algorithm. The quantity $\Delta_i$ is of the form
\[
    \Delta_i = \frac{b_{T,i}}{|x_{t_i,i}|}
\]
where $t_i$ is the first time step where the $i$-th feature has a nonzero value.

Clearly enough, introducing the parameter $C$ in our setting might allow a dynamical tuning of $\eta$ which we could not afford in our analysis. However, a rough comparison can be made by considering the intermediate bound~(\ref{eq:tocompare}) for the regularizer of type~(\ref{eq:f2}). Tuning $\eta = C\sqrt{d/2}$ leads to the regret bound
\[
    R_T(\bu) \le C\sqrt{2} \sum_{i=1}^d \sqrt{L^2+\sum_{t=1}^T \left(\frac{\ell'_{t,i}}{b_{t,i}}\right)^2 }
\]
for any $\bu\in\R^d$ such that
${\displaystyle
     \max_{i=1,\dots,d}\max_{t=1,\dots,T} \bigl(u_i x_{t,i}\bigr)^2 \le C
}$. This last bound now bears some resemblance to~(\ref{eq:lang}), although further study is clearly necessary to bring out the connections between these scale-invariant updates.


\section{Binary classification: aggressive and diagonal updates}
\label{sec-class}
In this section we show the first mistake bounds for Passive-Aggressive~\citep{CrammerDKSSS06} that improve on the standard Perceptron mistake bound, and also prove the first known bound for AROW with diagonal updates. Moreover, we recover ---with some minor improvement--- the known bounds for the second-order Perceptron~\citep{Cesa-BianchiCG05} and non-diagonalized AROW~\citep{CrammerKD09}.

We start by introducing binary classification as a special case of online convex optimization. Let $\fX$ be any finite-dimensional inner product space.
Given a fixed but unknown sequence $(\bx_1,y_1),(\bx_2,y_2),\dots$ of examples $(\bx_t,y_t)\in\fX\times\spin$, let $\ell_t(\bw) = \ell\bigl(\inner{\bw,\bx_t},y_t\bigr)$ be the hinge loss $\bigl[1 - y_t\inner{\bw,\bx_t}\bigr]_+$. It is easy to verify that the hinge loss satisfies the following condition:
\begin{equation}
\text{if $\ell_t(\bw) > 0$ then}
\;
    \ell_t(\bu)
\ge 
    1 + \inner{\bu,\bloss'_t}
\;
\text{for all $\bu,\bw\in\R^d$ with $\bloss_t'\in\partial\ell_t(\bw)$.}
\label{eq:hinge_condition}
\end{equation}
Note that when $\ell_t(\bw) > 0$ the subgradient notation is redundant, as $\partial\ell_t(\bw)$ is the singleton $\bigl\{\nabla\ell_t(\bw)\bigr\}$. In this secton, we apply the OMD algorithm to online binary classification by setting $\bz_t = - \eta_t\bloss_t'$ if $\ell_t(\bw_t) > 0$, and $\bz_t = \bzero$ otherwise.

We prove bounds on the number of steps $t$ in which the algorithm made a prediction mistake, defined by the condition $y_t\,\bw_t^\top\bx_t \le 0$ or, equivalently, by $\ell_t(\bw_t) \geq 1$. In the following, when the number $T$ of prediction steps is understood from the context, we denote by $\scM$ the subset of steps $t$ such that $y_t\,\bw_t^\top\bx_t \le 0$ and by $M$ its cardinality. Similarly, we denote by $\scU$ the set of margin error steps; that is, steps $t$ where $y_t\,\bw_t^\top\bx_t > 0$ and $\ell_t(\bw_t) > 0$. Also, we use $U$ to denote the cardinality of $\scU$. Following a standard terminology, we call \textsl{conservative} or \textsl{passive} an algorithm that updates its classifier only on mistake steps, and \textsl{aggressive} an algorithm that updates its classifier both on mistake and margin-error steps.

\subsection{First-order algorithms}
\label{sec:first_order}
%
%
We start by showing how our framework allows us to generalize and improve previous analyses for binary classification algorithms that use first-order aggressive updates. Let
\[
    L(\bu) = \sum_{t=1}^T \bigl[1 - y_t\inner{\bu,\bx_t}\bigr]_+
\]
be the cumulative hinge loss of $\bu\in\fX$ with respect to some sequence of examples.
The next result provides a general mistake bound for first-order algorithms.
\begin{cor}
\label{cor:gen-aggr}
Assume OMD is run with $f_t=f$, where $f$ has domain $\fX$, is $\beta$-strongly convex with respect to the norm $\norm{\cdot}$, and satisfies $f(\lambda \bu) \le \lambda^2 f(\bu)$ for all $\lambda\in\R$ and all $\bu \in \fX$. Further assume the input sequence is $\bz_t = \eta_t\,y_t\bx_t$ for some $0 \leq \eta_t \le 1$ such that $\eta_t = 1$ whenever $y_t\inner{\bw_t,\bx_t} \le 0$. Then, for all $T \ge 1$,
\begin{align*}
    M
\le
    \argmin_{\bu\in\fX} L(\bu) + D + \frac{2}{\beta}f(\bu) X_T^2 + X_T \sqrt{\frac{2}{\beta}f(\bu)L(\bu)},
\end{align*}
where $M = |\scM|$, ${\displaystyle X_t = \max_{i=1,\dots,t}\norm{\bx_i}_* }$ and
\[
    D = \sum_{t \in \mathcal{U}} \eta_t\left(\frac{\eta_t \norm{\bx_t}_*^2 +2 \beta\, y_t \inner{\bw_t,\bx_t}}{X_t^2} - 2 \right)~.
\]
\end{cor}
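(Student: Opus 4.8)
The plan is to specialize Lemma~\ref{lemma:general} to the constant regularizer $f_t=f$, turn the resulting bound on $\sum_t\langle\bz_t,\bu-\bw_t\rangle$ into a bound on the number $M$ of mistakes via the hinge-loss property~\eqref{eq:hinge_condition}, and finally tune a scale factor placed on the comparator to produce the $\sqrt{f(\bu)L(\bu)}$ term.

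First I would extract what the hypothesis $f(\lambda\bu)\le\lambda^2 f(\bu)$ gives. Letting $\lambda\to0$ (and using continuity of $f$) shows $f(\bzero)\le0$; letting $|\lambda|\to\infty$ shows $f$ can never be negative (otherwise $\lambda^2 f(\bu)\to-\infty$, impossible for a strongly convex, hence bounded below, function). Hence $f\ge0$ and $f(\bzero)=0$, so $f^*(\bzero)=-\min f=0$. Therefore, with $f_t=f$ for all $t$, every correction term $f^*_t(\btheta_t)-f^*_{t-1}(\btheta_t)$ in Lemma~\ref{lemma:general} vanishes, while $\beta_t\equiv\beta$ and $\normt{\cdot}\equiv\norm{\cdot}$. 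Moreover $f(c\bu)\le c^2 f(\bu)$ for every scalar $c$, so I may run the lemma with the rescaled comparator $c\bu$, with $c$ to be chosen later. Since $\bz_t=\eta_t y_t\bx_t$ with $\eta_t\in[0,1]$, and $\bz_t=\bzero$ on steps that are neither mistakes nor margin errors, $\dualnormt{\bz_t}^2=\eta_t^2\norm{\bx_t}_*^2$, and Lemma~\ref{lemma:general} yields, for every $c>0$,
\[
    \sum_{t\in\scM\cup\scU}\langle\bz_t,\,c\bu-\bw_t\rangle\;\le\;c^2 f(\bu)+\frac{1}{2\beta}\sum_{t\in\scM\cup\scU}\eta_t^2\norm{\bx_t}_*^2~.
\]

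Next I would lower-bound the left-hand side term by term. For $t\in\scM\cup\scU$ one has $\ell_t(\bw_t)>0$, so $\langle\bz_t,c\bu\rangle=c\,\eta_t\,y_t\langle\bu,\bx_t\rangle\ge c\,\eta_t(1-\ell_t(\bu))$ by~\eqref{eq:hinge_condition} (multiplying through by $c\eta_t\ge0$), while $\langle\bz_t,\bw_t\rangle=\eta_t\,y_t\langle\bw_t,\bx_t\rangle$. Using $\eta_t=1$ and $y_t\langle\bw_t,\bx_t\rangle\le0$ on $\scM$ — so that $\sum_{t\in\scM\cup\scU}\eta_t=M+\sum_{t\in\scU}\eta_t$ and the mistake cross-terms are nonpositive — together with $\sum_{t\in\scM\cup\scU}\eta_t\ell_t(\bu)\le L(\bu)$ (since $\eta_t\le1$ and $\ell_t(\bu)\ge0$), the displayed inequality rearranges into one of the form
\[
    c\,M\;\le\;c^2 f(\bu)+c\,L(\bu)+\frac{1}{2\beta}\sum_{t\in\scM}\norm{\bx_t}_*^2+\sum_{t\in\scU}\eta_t\!\left(y_t\langle\bw_t,\bx_t\rangle+\frac{\eta_t\norm{\bx_t}_*^2}{2\beta}-c\right)~.
\]
Bounding $\norm{\bx_t}_*\le X_t\le X_T$ converts $\tfrac{1}{2\beta}\sum_{t\in\scM}\norm{\bx_t}_*^2$ into $\tfrac{X_T^2}{2\beta}M$, which absorbs into $cM$ on the left, and rewriting the margin-error sum in terms of the radii $X_t$ collects it into the quantity $D$.

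Finally, with the mistake terms appearing as $M\bigl(c-\tfrac{X_T^2}{2\beta}\bigr)$ and $D$ isolated, I would optimize over the free parameter $c$ (equivalently, view the inequality as a quadratic in $M$, or in $\sqrt M$, and solve it): trading off $c^2 f(\bu)$ against $c\,L(\bu)$ yields the $\tfrac{2}{\beta}f(\bu)X_T^2$ term together with the cross-term $X_T\sqrt{\tfrac{2}{\beta}f(\bu)L(\bu)}$, using $\sqrt{a+b}\le\sqrt a+\sqrt b$ to keep $D$ out of the square root. Since the resulting inequality holds for every $\bu\in\fX$, it holds for the minimizing $\bu$, which is the claim. \emph{Main obstacle.} The delicate part is precisely this middle bookkeeping: separating the genuine mistake count $M$ from the margin-error contributions, treating each quadratic term $\eta_t^2\norm{\bx_t}_*^2$ at the correct scale ($X_T$ for mistake steps, $X_t$ inside $D$) so that the coefficient $X_T^2/(2\beta)$ of $M$ stays below $c$, and only then solving for $M$. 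The rescaled-comparator device — legitimate exactly because $f(c\bu)\le c^2 f(\bu)$ — is what makes the final optimization, and hence the $\sqrt{f(\bu)L(\bu)}$ dependence, possible; note that in the conservative case ($\scU=\emptyset$, $D=0$) this recovers the classical Perceptron mistake bound.
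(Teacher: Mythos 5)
Your proposal is correct and follows essentially the same route as the paper: the paper simply packages your rescaled-comparator step (run Lemma~\ref{lemma:general} on $\lambda\bu$, use~\eqref{eq:hinge_condition}, then optimize $\lambda$) into Lemma~\ref{cor:bound_hinge_loss} in the appendix, and then performs exactly the bookkeeping you describe—pulling $X_T$ out of the square root, bounding the margin-error terms by their $X_t^2$-normalized versions, solving the resulting implicit inequality for $M$, and finishing with $\sqrt{a+b}\le\sqrt{a}+\tfrac{b}{2\sqrt{a}}$ and $\sqrt{a+b}\le\sqrt{a}+\sqrt{b}$. Your observation that $f\ge 0$ and $f^*(\bzero)=0$ (so the residue terms vanish) is a detail the paper leaves implicit, and your ``optimize $c$ first, then solve the quadratic in $M$'' plan reproduces the paper's chain exactly, including the precise form of $D$.
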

\begin{proof}
Fix any $\bu\in\fX$.
Using the second bound of Lemma~\ref{cor:bound_hinge_loss} in the Appendix, with the assumption $\eta_t = 1$ when $t \in \scM$, we get
\begin{align*}
M &\leq L(\bu) + \sqrt{2 f(\bu)} \sqrt{\sum_{t \in \mathcal{M}} \frac{\|\bx_t\|_*^2}{\beta} + \sum_{t \in \mathcal{U}} \left(\frac{\eta_t^2}{\beta} \|\bx_t\|_*^2 +2 \eta_t y_t\inner{\bw_t,\bx_t} \right)} - \sum_{t \in \mathcal{U}} \eta_t \\
&\leq L(\bu) + X_T \sqrt{\frac{2}{\beta} f(\bu)} \sqrt{M + \sum_{t \in \mathcal{U}} \frac{\eta_t^2 \|\bx_t\|_*^2 +2 \beta \eta_t y_t \inner{\bw_t,\bx_t}}{X_t^2}} - \sum_{t \in \mathcal{U}} \eta_t
\end{align*}
where we have used the fact that $X_t \le X_T$ for all $t = 1,\dots,T$. Solving for $M$ we get 
\begin{align}
M \le L(\bu) + \frac{1}{\beta}f(\bu) X_T^2 + X_T \sqrt{\frac{2}{\beta}f(\bu)}\sqrt{\frac{1}{2\beta} X_T^2 f(\bu) + L(\bu) + D'} - \sum_{t \in \mathcal{U}} \eta_t
\label{eq:gen_aggr_bound2}
\end{align}
with $\frac{1}{2\beta} X_T^2 f(\bu) + L(\bu) + D'\geq 0$, and 
\[
    D' = \sum_{t \in \mathcal{U}} \left(\frac{\eta_t^2 \|\bx_t\|_*^2 +2 \beta \eta_t y_t \inner{\bw_t,\bx_t}}{X_t^2} - \eta_t \right)~.
\]
We further upper bound the right-hand side of~(\ref{eq:gen_aggr_bound2}) using the elementary inequality $\sqrt{a+b} \le \sqrt{a} +\tfrac{b}{2\sqrt{a}}$ for all $a>0$ and $b \geq -a$. This gives
\begin{align*}
    M
&\le
    L(\bu) + \frac{1}{\beta}f(\bu) X_T^2 + X_T \sqrt{\frac{2}{\beta}f(\bu)}\sqrt{\frac{1}{2\beta} X_T^2 f(\bu) + L(\bu)} \\
&\quad + \frac{X_T D' \sqrt{\frac{2}{\beta}f(\bu)}}{2\sqrt{\frac{1}{2\beta} X_T^2 f(\bu) + L(\bu)}} - \sum_{t \in \mathcal{U}} \eta_t
\\ &\le
    L(\bu) + \frac{1}{\beta} f(\bu) X_T^2 + X_T \sqrt{\frac{2}{\beta}f(\bu)}\sqrt{\frac{1}{2 \beta} X_T^2 f(\bu) + L(\bu)} + D' - \sum_{t \in \mathcal{U}} \eta_t~.
\end{align*}
Applying the inequality $\sqrt{a+b} \le \sqrt{a} + \sqrt{b}$ and rearranging gives the desired bound.
\end{proof}
The $p$-norm Perceptron of \cite{Gentile03} is obtained by running OMD in conservative mode with $f_t = f = \tfrac{1}{2}\norm{\cdot}_p^2$ for $1 < p \le 2$. In this case we have $\mathcal{U} =
\emptyset$, $\norm{\cdot}_* = \norm{\cdot}_q$ where $q =\tfrac{p}{p-1}$, and $\beta = p-1$ because
$\tfrac{1}{2}\norm{\cdot}^2_p$ is $(p-1)$-strongly convex with respect
to $\norm{\cdot}_p$ for $1 < p \le 2$, see~\cite[Lemma~17]{Shalev-Shwartz07}. Hence Corollary~\ref{cor:gen-aggr} delivers the mistake bound of~\cite{Gentile03}.

However, the term $D$ in the bound of Corollary~\ref{cor:gen-aggr} can also be negative. We can minimize it, subject to $0 \le \eta_t \le 1$, by setting
\[
    \eta_t = \max\left\{\min\left\{\frac{X_t^2 - \beta y_t\inner{\bw_t,\bx_t}}{\|\bx_t\|^2_*},1\right\},0\right\}~.
\]
This tuning of $\eta_t$ is quite similar to that of the Passive-Aggressive algorithm (type I) of~\cite{CrammerDKSSS06}. In fact for $f_t = f = \tfrac{1}{2}\norm{\cdot}_2^2$ we would have 
\[
    \eta_t = \max\left\{\min\left\{\frac{X_t^2 - y_t\inner{\bw_t,\bx_t}}{\|\bx_t\|^2},1\right\},0\right\}
\]
while the update rule for PA-I is
\[
    \eta_t = \max\left\{\min\left\{\frac{1 - y_t\inner{\bw_t,\bx_t}}{\|\bx_t\|^2},1\right\},0\right\}~.
\]
The mistake bound of Corollary~\ref{cor:gen-aggr} is however better than the aggressive bounds for PA-I of~\cite{CrammerDKSSS06} and~\cite{Shalev-Shwartz07}. Indeed, while the PA-I bounds are generally worse than the Perceptron mistake bound
\[
    M
\le
    L(\bu) + \bigl(\norm{\bu}X_T\bigr)^2 + \norm{\bu}X_T \sqrt{L(\bu)}
\]
as discussed by \cite{CrammerDKSSS06}, our bound is better as soon as $D < 0$. Hence, it can be viewed as the first theoretical evidence in support of aggressive updates. It also improves over previous attempts to justify aggressive updates in~\citep{OrabonaKC09,JieOFCC10}.


\subsection{Second-order algorithms}
\label{sec:second_order}
We now move on to the analysis of second-order algorithms for binary classification. Here we use $\fX=\R^d$ and let the inner product $\inner{\bu,\bx}$ be the standard dot product $\bu^{\top}\bx$.

Second-order algorithms for binary classification are online variants of Ridge regression. Recall that the Ridge regression linear predictor is defined by
\[
    \bw_t = \argmin_{\bw\in\R^d}\left(\sum_{s=1}^t\bigl(\bw^{\top}\bx_s - y_s \bigr)^2 + \norm{\bw}^2\right)~.
\]
The closed-form expression for $\bw_t$, involving the design matrix $S_t = \bigl[\bx_1,\dots,\bx_t\bigr]$ and the label vector $\by_t = (y_1,\dots,y_t)$, is given by
$
    \bw_t = \bigl(I + S_t^{\top}S_t\bigr)^{-1}S_t\by_t
$.
The second-order Perceptron (see below) uses this weight $\bw_t$, but $S_t$ and $\by_t$ only contain the examples $(\bx_s,y_s)$ such that $y_s\,\bw_s^{\top}\bx_s \le 0$. Namely, those previous examples on which a mistake occurred. In this sense, it is an online variant of the Ridge regression algorithm.

In practice, second-order algorithms may perform better than their first-order counterparts, such as the algorithms in the Perceptron family. There are two basic second-order algorithms: the second-order Perceptron of~\cite{Cesa-BianchiCG05} and the AROW algorithm of~\cite{CrammerKD09}. We show that both of them are instances of OMD and recover their mistake bounds as special cases of our analysis.

Let $\bz_t = \eta_t\,y_t\bx_t$ and $f_t(\bx)=\tfrac{1}{2} \bx^{\top}A_t\,\bx$, where $A_0=I$ and $A_t=A_{t-1} + \tfrac{1}{r}\bx_t \bx_t^{\top}$ with $r>0$. Each dual function $f^*_t$ is given by $f^*_t(\bx) = \tfrac{1}{2} \bx^{\top} A^{-1}_t \bx$. The functions $f_t$ are $1$-strongly convex with respect to the norm $\normt{\bx}=\sqrt{\bx^{\top}A_t\, \bx}$ with dual norm $\dualnormt{\bx}=\sqrt{\bx^{\top}A_t^{-1}\,\bx}$.

The conservative version of OMD run with $f_t$ chosen as above and $r=1$ corresponds to the second-order Perceptron. The aggressive version corresponds instead to AROW with a minor difference. Let $m_t = \btheta_t^\top A^{-1}_{t-1}\bx_t$. Then for AROW we have $\bw_t^{\top}\bx_t = m_t$ whereas for OMD it holds that $\bw_t^\top\bx_t = \btheta_t^\top A^{-1}_t\bx_t = m_t\frac{r}{r+\chi_t}$, where we used the Woodbury identity and set $\chi_t = \bx^{\top}_t A^{-1}_{t-1} \bx_t$. Note that the sign of $\bw_t^{\top}\bx_t$ is the same for both algorithms, but OMD updates when $y_t m_t\frac{r}{r+\chi_t} \le 1$ while AROW updates when $y_t m_t\leq 1$. Typically, for $t$ large the value of $\chi_t$ is small and the two algorithms behave similarly.

In order to derive a mistake bound for OMD run with $f_t(\bx)=\tfrac{1}{2} \bx^{\top}A_t\,\bx$, first observe that using the Woodbury identity we have
\[
    f^*_t(\theta_t)-f^*_{t-1}(\theta_t)= - \frac{(\bx^\top_t A^{-1}_{t-1} \btheta_t)^2}{2 (r+\bx_t^\top A^{-1}_{t-1} \bx_t)}=-\frac{m_t^2}{2 (r+\chi_t)}~.
\]
Hence, using the second bound of Lemma~\ref{cor:bound_hinge_loss} in the Appendix, and setting $\eta_t=1$, we obtain
\begin{align*}
M &+ U - L(\bu) \\
&\leq \sqrt{\bu^{\top} A_T\, \bu} \sqrt{\sum_{t \in \mathcal{M} \cup \mathcal{U}} \left(\bx^{\top}_t A^{-1}_t \bx_t + 2 y_t \bw_t^\top \bx_t -\frac{m_t^2}{r+\chi_t} \right)} \\
&\leq \sqrt{\|\bu\|^2 + \frac{1}{r}\sum_{t \in \mathcal{M} \cup \mathcal{U}} (\bu^{\top} \bx_t)^2 } \sqrt{r \ln|A_T| + \sum_{t \in \mathcal{M} \cup \mathcal{U}} \left(2 y_t \bw_t^\top \bx_t -\frac{m_t^2}{r+\chi_t} \right)}\\
&= \sqrt{r\,\|\bu\|^2 + \sum_{t \in \mathcal{M} \cup \mathcal{U}} \bigl(\bu^{\top} \bx_t\bigr)^2 } \sqrt{\ln|A_T| + \sum_{t \in \mathcal{M} \cup \mathcal{U}} \frac{m_t (2 r y_t - m_t)}{r (r+\chi_t)} }
\end{align*}
for all $\bu\in\R^d$.

This bound improves slightly over the known bound for AROW in the last sum in the square root. In fact in AROW we have the term $U$, while here we have
\begin{align*}
\sum_{t \in \mathcal{M} \cup \mathcal{U}} \frac{m_t (2 r y_t - m_t)}{r (r+\chi_t)} \leq \sum_{t \in \mathcal{U}} \frac{m_t (2 r y_t - m_t)}{r (r+\chi_t) } \leq 
\sum_{t \in \mathcal{U}} \frac{r^2}{r (r+\chi_t)} \leq U
\end{align*}
where the first inequality holds because $t \in \scM$ implies $y_t m_t \le 0$, which in turn implies $m_t (2 r y_t - m_t) \le 0$.
In the conservative case, when $\mathcal{U} \equiv \emptyset$, the bound specializes to the standard second-order Perceptron bound.

\subsection{Diagonal updates}
\label{sec:diagonal}
Computing $f^*_t$ in AROW and the second-order Perceptron requires inverting $A_t$, which can be done from $A_{t-1}^{-1}$ in time quadratic in $d$. A much better scaling, linear in $d$, can be obtained when the algorithm use a diagonalized version of $A_t$. We now use Corollary~\ref{cor:bound_hinge_loss} to prove a mistake bound for the diagonal version of the second-order Perceptron. Denote $D_t=\diag{A_t}$ be the diagonal matrix that agrees with $A_t$ on the diagonal, where $A_t$ is defined as before, and let $f_t(\bx)=\frac{1}{2} \bx^{\top} D_t\,\bx$. Setting $\eta_t=1$, using the second bound of Lemma~\ref{cor:bound_hinge_loss}, and using also Lemma~\ref{lemma:log_diagonal},
we have\footnote{We did not optimize the constant multiplying $U$ in the bound.}
\begin{align}
&M + U
\le \argmin_{\bu\in\R^d} L(\bu) + \sqrt{\bu^T D_T \bu \left(r \sum_{i=1}^d \ln \left( \frac{1}{r}\sum_{t \in \mathcal{M} \cup \mathcal{U} } x^2_{t,i} + 1\right) + 2U \right) } \nonumber \\
&\quad = \argmin_{\bu\in\R^d} L(\bu) \nonumber \\
&\qquad + \sqrt{\|\bu\|^2+\frac{1}{r}\sum_{i=1}^d u_i^2 \left(\sum_{t\in \mathcal{M} \cup \mathcal{U}} x^2_{t,i}\right)} \sqrt{r \sum_{i=1}^d \ln \left( \frac{1}{r}\sum_{t \in \mathcal{M} \cup \mathcal{U} } x^2_{t,i} + 1\right) + 2U }~. \label{eq:diag-bound}
\end{align}
This allows us to theoretically analyze the cases where this algorithm could be advantageous.
For example, features of textual data are typically binary, and it is often the case that most of the features are zero most of the time. On the other hand, these ``rare'' features are usually the most informative ones ---see, e.g., the discussion of \cite{CrammerDP08a,Crammer:2012:CLC:2343676.2343704}. 

\begin{figure}[t]
\centering
\includegraphics[width=0.5\linewidth]{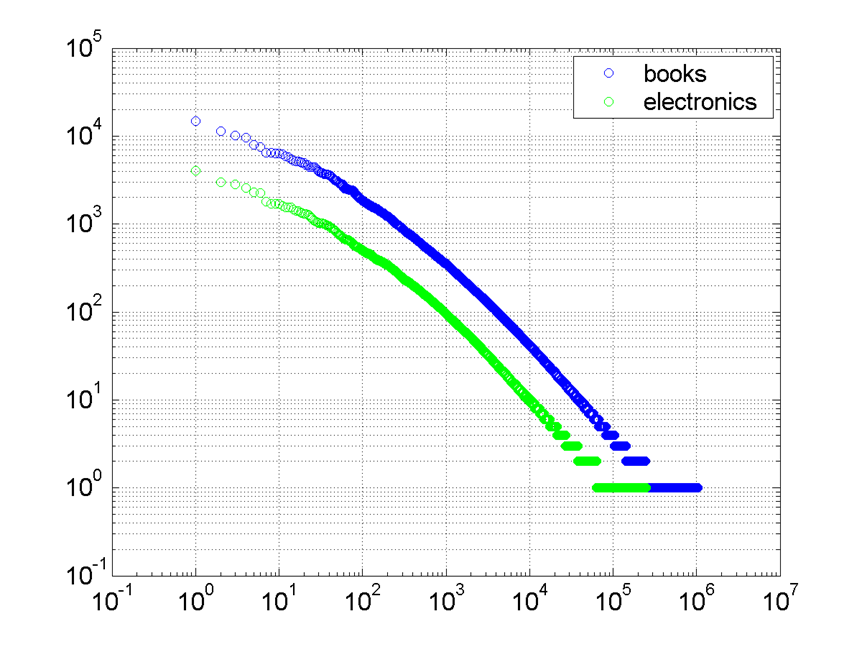}
\caption{Evidence of heavy tails for textual data. The plots show the number of words vs.\ the word rank on two sentiment data sets.}
\label{fig:nlp}
\end{figure}

Figure~\ref{fig:nlp} shows the number of times each feature (word) appears in two sentiment datasets vs.\ the word rank. Clearly, there are a few very frequent words and many rare words. These exact properties originally motivated the CW and AROW algorithms, and now our analysis provides a theoretical justification. Concretely, the above considerations support the assumption that the optimal hyperplane $\bu$ satisfies
\begin{align*}
\sum_{i=1}^d u_i^2 \sum_{t\in \mathcal{M} \cup \mathcal{U}} x^2_{t,i} \approx \sum_{i \in \mathcal{I}} u_i^2 \sum_{t\in \mathcal{M} \cup \mathcal{U}} x^2_{t,i} \leq s \sum_{i \in \mathcal{I}} u_i^2 \approx s \|\bu\|^2
\end{align*}
where $\mathcal{I}$ is the set of informative and rare features, and $s$ is the maximum number of times these features appear in the sequence. 
Running the diagonal version of the second order Pereptron so that $\mathcal{U} = \emptyset$, and assuming that
\begin{align}
\label{eq:nlp}
\sum_{i=1}^d u_i^2 \sum_{t\in \mathcal{M} \cup \mathcal{U}} x^2_{t,i} \leq s \|\bu\|^2
\end{align}
the last term in the mistake bound~(\ref{eq:diag-bound}) can be re-written as
\begin{align*}
    &\sqrt{\|\bu\|^2+\frac{1}{r}\sum_{i=1}^d u_i^2 \sum_{t\in \mathcal{M}} x^2_{t,i}}
    \sqrt{r \sum_{i=1}^d \ln \left( \frac{1}{r}\sum_{t \in \mathcal{M}} x^2_{t,i} + 1\right)} \\
&\quad \leq
    \|\bu\| \sqrt{r + s} \sqrt{ d \ln \left( \frac{M X_T^2}{d r} + 1 \right)}
\end{align*}
where we calculated the maximum of the sum, given the constraint
\[
    \sum_{i=1}^d \sum_{t \in \mathcal{M}} x^2_{t,i} \leq X_T^2 M~.
\]
We can now use Corollary~\ref{corollary:log2} in the Appendix to obtain
\begin{align*}
M &\leq \argmin_{\bu\in\R^d} L(\bu) + \|\bu\| \sqrt{(r+s) d \ln \left(\frac{\sqrt{8} \|\bu\|^2 (r+s) X_T^4}{e d r^2}  + 2 L(\bu)\frac{X_T^2}{d r}+2 \right)}~.
\end{align*}
Hence, when the hypothesis (\ref{eq:nlp}) is verified, the number of mistakes of the diagonal version of AROW depends on $\sqrt{\ln L(\bu)}$ rather than on $\sqrt{L(\bu)}$.

Diagonal updates for online convex optimization were also proposed and analyzed by~\citep{McMahanS10,DuchiHS10,RML13}. When instantiated to the binary classification setting studied in this section, their analysis delivers regret bounds which are not comparable to ours.

\section{Conclusions}
\label{sec-conc}
We proposed a framework for online convex optimization combining online mirror descent with time-varying regularizers. This allowed us to view second-order algorithms (such as the Vovk-Azoury-Warmuth algorithm, the second-order Perceptron, and the AROW algorithm) and algorithms for composite losses as special cases of mirror descent. Our analysis also captures second-order variants that only employ the diagonal elements of the second order information matrix, a result which was not within reach of the previous techniques.

Within our framework, we also derived and analyzed new regularizers based on an adaptive weighted version of the $p$-norm Perceptron. These regularizers generate instances of OMD that are both efficient to implement and invariant to rescaling of individual coordinates in the data. In the case of sparse targets, the corresponding instances of OMD achieve performance bounds better than that of OMD with $1$-norm regularization. 

We also improved previous bounds for existing first-order algorithms. For example, we were able to formally explain the phenomenon according to which aggressive algorithms often exhibit better empirical performance than their conservative counterparts. Specifically, our refined analysis provides a bound for Passive-Aggressive (PA-I) that is never worse (and sometimes better) than the Perceptron bound.

One interesting direction to pursue is the derivation and analysis of algorithms based on time-varying versions of the entropic regularizers used by the EG and Winnow algorithms. A remarkable recent result along these lines is the work of \cite{SteinhardtL14}, in which a time-varying entropic regularizers is used to obtain an improved version of EG for the prediction with experts setting (a special case of online convex optimization).

More in general, it would be useful to devise a more systematic approach to the design of adaptive regularizers enjoying a given set of desired properties, such as invariance to rescalings. This should help in obtaining more examples of adaptation mechanisms that are not based on second-order information.

\section*{Acknowledgements}
The second author gratefully acknowledges partial support by an Israeli Science Foundation grant ISF-1567/10.
The third author acknowledges partial support by MIUR (project ARS TechnoMedia, PRIN 2010-2011, grant no.\ 2010N5K7EB 003).



\appendix
\section*{Technical lemmas}
\begin{proof}[Proof of Lemma~\ref{l:weighted-q}]
Define $h(\bw) = g(A \bw)$, where $A$ is an invertible matrix.
First note that the Fenchel conjugate of $h(\bw)$ is $h^*(\btheta) = g^*(A^{-\top} \btheta)$ ---see for example \citep[Proposition 13.20 IV]{BauschkeC11}. Hence, the Fenchel conjugate of $f$ is obtained by setting: $A = \mathrm{diag}\bigl(\{a_1^{1/q},\dots,a_d^{1/q}\}\bigr)$, $g(\bw)=\frac{1}{2(p-1)}\norm{\btheta}_p^2$, and by using the known Fenchel conjugate of $g$.

In order to show the second part, using \citep[Lemma~14]{Shalev-Shwartz07} we may prove strong convexity of $h$ w.r.t.\ a norm $\norm{\cdot}$ by showing that 
\begin{equation}
  \label{eq:lemmweightpnorm1}
  \bx^\top \nabla^2 h(\bw) \bx = \bx^\top A^\top \nabla^2 g(A \bw) A \bx \geq \norm{\bx}^2~.
\end{equation}
Moreover, \citep[Lemma~17]{Shalev-Shwartz07} proves that, for any $\bx,\bw \in \R^d$, we have
\begin{equation}
\label{eq:lemmweightpnorm2}
\bx \nabla^2 g(\bw) \bx \geq \|\bx\|^2_p~.
\end{equation}
Putting together \eqref{eq:lemmweightpnorm1} and \eqref{eq:lemmweightpnorm2} and the same setting of $A$ and $g$ used above, we have that the strong convexity of $f$.

We now prove that the dual norm of $\left( \sum_{i=1}^d \vert x_i
  \vert^q b_i \right)^{1/q}$ is $\left( \sum_{i=1}^d \vert \theta_i
  \vert^p b_i^{1-p} \right)^{1/p}$. By definition of dual norm, 
\begin{align*}
\sup_{\bx}\theset{\bu^{\top}\bx}{ \left( \sum_{i=1}^d \vert x_i
  \vert^q b_i \right)^{1/q}\le 1} 
&= \sup_{\bx}\theset{\bu^{\top}\bx}{ \left( \sum_{i=1}^d \left\vert x_i
  b_i^{1/q}\right\vert^q \right)^{1/q}\le 1} \\
&= \sup_{\by}\theset{\sum_i u_i y_i b_i^{-1/q}}{ \left( \sum_{i=1}^d \left\vert y_i
  \right\vert^q \right)^{1/q}\le 1} \\
&= \left\Vert \paren{u_1 b_1^{-1/q},\dots,u_d b_d^{-1/q}} \right\Vert_p
\end{align*}
where $1/q+1/p=1$. Writing the last norm explicitly and observing that $p=q/(q-1)$,
\begin{align*}
\paren{\sum_i { \vert u_i \vert ^p b_i^{-p/q}}}^{1/p} = \paren{\sum_i { \vert u_i \vert ^p b_i^{1-q}}}^{1/p}
\end{align*}
which concludes the proof.
\end{proof}
\begin{lemma}
\label{cor:bound_hinge_loss}
Assume OMD is run with functions $f_1,f_2,\dots,f_T$ defined on $\fX$ and such that each $f_t$ is $\beta_t$-strongly convex with respect to the norm $\norm{\cdot}_{f_t}$ and $f_t(\lambda \bu) \le \lambda^2 f_t(\bu)$ for all $\lambda\in\R$ and all $\bu \in S$.
For each $t=1,2,\dots,T$ let $\dualnormt{\cdot}$ be the dual norm of $\normt{\cdot}$.
Assume further the input sequence is $\bz_t = - \eta_t\bloss_t'$ for some $\eta_t > 0$, where $\bloss_t'\in\partial\ell_t(\bw_t)$, $\ell_t(\bw_t) = 0$ implies $\bloss_t' = \bzero$, and $\ell_t = \ell\bigl(\inner{\cdot,\bx_t},y_t\bigr)$ satisfies~(\ref{eq:hinge_condition}). Then, for all $T \ge 1$,
\begin{align}
\label{eq:pre-mb}
&\sum_{t \in \mathcal{M} \cup \mathcal{U}} \eta_t \leq L_{\eta} + \lambda f_T(\bu) + \frac{1}{\lambda} \left(B+\sum_{t \in \mathcal{M} \cup \mathcal{U}} \left(\frac{\eta_t^2}{2 \beta_t} \dualnormt{\bloss'_t}^2 -\eta_t \inner{\bw_t,\bloss'_t}\right)\right)
\end{align}
for any $\bu \in S$ and any $\lambda>0$, where
\[
    L_{\eta} = \sum_{t \in \mathcal{M} \cup \mathcal{U}} \eta_t\,\ell_t(\bu)
\qquad\text{and}\qquad
    B=\sum_{t=1}^{T} \bigl(f^*_t(\btheta_t)-f^*_{t-1}(\btheta_{t})\bigr)~.
\]
In particular, choosing the optimal $\lambda$, we obtain
\begin{align*}
&\sum_{t \in \mathcal{M} \cup \mathcal{U}} \eta_t \leq L_{\eta} + 2 \sqrt{f_T(\bu)} \sqrt{\left[ B+\sum_{t \in \mathcal{M} \cup \mathcal{U}} \left(\frac{\eta_t^2}{2\beta_t} \dualnormt{\bloss'_t}^2 -\eta_t \inner{\bw_t,\bloss'_t}\right)\right]_+}~.
\end{align*}
\end{lemma}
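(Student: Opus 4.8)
The plan is to apply Lemma~\ref{lemma:general} with the \emph{rescaled} comparator $\lambda\bu$ in place of the free vector there --- legitimate because the common domain here is all of $\fX$, so $\lambda\bu$ is admissible for every $\lambda>0$ --- and then optimize over $\lambda$ only at the very end. Concretely, I would first invoke Lemma~\ref{lemma:general} to get
\[
\sum_{t=1}^T \inner{\bz_t, \lambda\bu-\bw_t} \le f_T(\lambda\bu) + \sum_{t=1}^{T}\left(\frac{\dualnormt{\bz_t}^2}{2\beta_t} + f^*_t(\btheta_t)-f^*_{t-1}(\btheta_t)\right),
\]
noting that the residual sum on the right is exactly $B$ by definition.

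Next I would substitute $\bz_t=-\eta_t\bloss'_t$. On any step $t\notin\scM\cup\scU$ we have $\ell_t(\bw_t)=0$, hence $\bloss'_t=\bzero$ by hypothesis and $\bz_t=\bzero$, so every sum over $t=1,\dots,T$ collapses to a sum over $\scM\cup\scU$. Expanding the inner product, the left-hand side becomes $-\lambda\sum_{t\in\scM\cup\scU}\eta_t\inner{\bloss'_t,\bu}+\sum_{t\in\scM\cup\scU}\eta_t\inner{\bloss'_t,\bw_t}$. To lower bound the first term I would invoke the hinge condition~(\ref{eq:hinge_condition}) with $\bw=\bw_t$, valid since $\ell_t(\bw_t)>0$ for $t\in\scM\cup\scU$, which gives $-\inner{\bloss'_t,\bu}\ge 1-\ell_t(\bu)$; multiplying by $\lambda\eta_t>0$ and summing yields $-\lambda\sum_{t\in\scM\cup\scU}\eta_t\inner{\bloss'_t,\bu}\ge\lambda\bigl(\sum_{t\in\scM\cup\scU}\eta_t-L_\eta\bigr)$.

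Combining the lower bound on the left-hand side with the upper bound from Lemma~\ref{lemma:general} produces
\[
\lambda\sum_{t\in\scM\cup\scU}\eta_t \le \lambda L_\eta + f_T(\lambda\bu) + C, \qquad C := B + \sum_{t\in\scM\cup\scU}\left(\frac{\eta_t^2}{2\beta_t}\dualnormt{\bloss'_t}^2 - \eta_t\inner{\bw_t,\bloss'_t}\right).
\]
Applying the scaling hypothesis $f_T(\lambda\bu)\le\lambda^2 f_T(\bu)$ and dividing through by $\lambda>0$ gives exactly~(\ref{eq:pre-mb}). For the last displayed inequality, note that letting $\lambda\to0^+$ in the display above forces $C\ge 0$ (so the positive part is only a safeguard), and then minimize the right-hand side of~(\ref{eq:pre-mb}) over $\lambda>0$ by AM--GM, namely $\lambda f_T(\bu)+C/\lambda\ge 2\sqrt{f_T(\bu)\,C}$ with equality at $\lambda=\sqrt{C/f_T(\bu)}$, yielding the bound (the degenerate case $f_T(\bu)=0$ being covered by letting $\lambda\to\infty$ instead).

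I do not expect a serious obstacle: the one genuine idea is the use of the rescaled comparator together with $f_t(\lambda\bu)\le\lambda^2 f_t(\bu)$ to manufacture the free parameter $\lambda$ --- precisely the device used in Perceptron-style mistake-bound analyses --- after which everything reduces to bookkeeping, the only points requiring care being that all sums should be restricted to $\scM\cup\scU$ (using $\bloss'_t=\bzero$ off that set) and that the hinge inequality~(\ref{eq:hinge_condition}) is oriented correctly.
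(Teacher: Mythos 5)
Your proof is correct and follows essentially the same route as the paper's: apply Lemma~\ref{lemma:general} to the rescaled comparator $\lambda\bu$, use the scaling property $f_T(\lambda\bu)\le\lambda^2 f_T(\bu)$ together with the hinge condition~(\ref{eq:hinge_condition}), collapse the sums to $\scM\cup\scU$ via $\bloss'_t=\bzero$ off that set, divide by $\lambda$, and finally optimize over $\lambda$. Your added observation that letting $\lambda\to 0^+$ forces the bracketed quantity to be nonnegative (so the positive part is only a safeguard) is a correct refinement that the paper leaves implicit.
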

\begin{proof}
We apply Lemma~\ref{lemma:general} with $\bz_t = - \eta_t\bloss_t'$ and using $\bu = \lambda \btildeu$ for any $\lambda > 0$,
\begin{align*}
    \sum_{t=1}^T \eta_t\inner{\bloss'_t,\bw_t-\lambda\btildeu}
\le
    \lambda^2 f_T(\btildeu) + \sum_{t=1}^{T} \left(\frac{\eta_t^2}{2\beta_t}\dualnormt{\bloss'_t}^2 + f^*_t(\btheta_t)-f^*_{t-1}(\btheta_{t}) \right)~.
\end{align*}
Since $\ell_t(\bw_t) = 0$ implies $\bloss_t' = \bzero$, and using~(\ref{eq:hinge_condition}),
\[
    \sum_{t \in \mathcal{M} \cup \mathcal{U}} \Bigl(\eta_t\inner{\bloss'_t,\bw_t} + \eta_t - \eta_t\,\ell_t(\bu) \Bigr)
\le
    \sum_{t=1}^T \eta_t\inner{\bloss'_t,\bw_t-\lambda\btildeu}~.
\]
Dividing by $\lambda$ and rearranging gives the first bound. The second bound is obtained by choosing the $\lambda$ that makes equal the last two terms in the right-hand side of~(\ref{eq:pre-mb}).
\end{proof}
\begin{lemma}
\label{lemma:log_diagonal}
For all $\bx_1,\dots\bx_T\in\R^d$ let $D_t=\diag{A_t}$ where $A_0=I$ and $A_t=A_{t-1} + \tfrac{1}{r}\bx_t \bx_t^{\top}$ for some $r > 0$. Then
\begin{align*}
\sum_{t=1}^T \bx_t D_t^{-1} \bx_t \leq r \sum_{i=1}^d \ln \left( \frac{1}{r}\sum_{t=1}^T \bx^2_{t,i}+1\right)~.
\end{align*}
\end{lemma}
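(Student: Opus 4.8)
The plan is to exploit that $D_t$ is diagonal, so the quadratic form splits across coordinates and the problem reduces to a one-dimensional telescoping estimate. First I would note that, since $A_0 = I$ and each update adds the rank-one matrix $\tfrac1r\bx_s\bx_s^{\top}$ whose $i$-th diagonal entry is $\tfrac1r x_{s,i}^2$, we have $(D_t)_{ii} = (A_t)_{ii} = 1 + \tfrac1r\sum_{s=1}^t x_{s,i}^2$. Therefore
\[
  \sum_{t=1}^T \bx_t^{\top} D_t^{-1}\bx_t
  = \sum_{t=1}^T \sum_{i=1}^d \frac{x_{t,i}^2}{(A_t)_{ii}}
  = \sum_{i=1}^d \sum_{t=1}^T \frac{x_{t,i}^2}{1 + \tfrac1r\sum_{s=1}^t x_{s,i}^2},
\]
and it suffices to prove, for each fixed coordinate $i$, that the inner sum is at most $r\ln\bigl(1 + \tfrac1r\sum_{t=1}^T x_{t,i}^2\bigr)$.

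For the one-dimensional bound I would set $P_t = \tfrac1r\sum_{s=1}^t x_{s,i}^2$, so that $P_0 = 0$ and $P_t \ge P_{t-1}$ because the increments $\tfrac1r x_{t,i}^2$ are nonnegative. Applying the elementary inequality $\ln(1+z) \ge \tfrac{z}{1+z}$ (valid for $z > -1$) to the nonnegative quantity $z_t = \tfrac{P_t - P_{t-1}}{1 + P_{t-1}}$ gives, on the one hand, $\ln(1+z_t) = \ln(1+P_t) - \ln(1+P_{t-1})$ since $1 + z_t = \tfrac{1+P_t}{1+P_{t-1}}$, and on the other hand $\tfrac{z_t}{1+z_t} = \tfrac{P_t - P_{t-1}}{1+P_t} = \tfrac{x_{t,i}^2/r}{(A_t)_{ii}}$. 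Hence $\tfrac{x_{t,i}^2}{(A_t)_{ii}} \le r\bigl(\ln(1+P_t) - \ln(1+P_{t-1})\bigr)$, and summing over $t = 1,\dots,T$ the right-hand side telescopes to $r\bigl(\ln(1+P_T) - \ln(1+P_0)\bigr) = r\ln(1+P_T)$. Summing the resulting inequality over $i = 1,\dots,d$ yields the claim.

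I do not expect a real obstacle: this is the standard ``integral comparison'' argument used to bound sums of the form $\sum_t a_t/(c + \sum_{s\le t}a_s)$ by a logarithm. The only point that warrants care is bookkeeping the $1/r$ normalization through the reduction, and recognizing that the statement is precisely the diagonal analogue of the familiar bound $\sum_t \bx_t^{\top}A_t^{-1}\bx_t \le r\ln\det A_T$, with $\ln\det$ replaced by $\sum_i \ln (A_T)_{ii}$ once the quadratic form decouples.
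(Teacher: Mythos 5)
Your proof is correct and follows essentially the same route as the paper's: decompose the quadratic form coordinate-wise using the diagonality of $D_t$, then bound each one-dimensional sum $\sum_t \frac{v_t - v_{t-1}}{v_t}$ by a telescoping sum of logarithms via concavity (your inequality $\ln(1+z) \ge \tfrac{z}{1+z}$ is the same estimate the paper writes as $\ln b \le \ln a + \tfrac{b-a}{a}$). No gaps.
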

\begin{proof}
Consider the sequence $a_t \geq 0$ and define $v_t = a_0 + \sum_{i=1}^t a_i$ with $a_0 >0$. The concavity of the logarithm implies $\ln b \leq \ln a + \frac{b-a}{a}$ for all $a,b>0$. Hence we have
\begin{align*}
\sum_{t=1}^T \frac{a_t}{v_t} = \sum_{t=1}^T \frac{v_t-v_{t-1}}{v_t} \leq \sum_{t=1}^T \ln \frac{v_t}{v_{t-1}}= \ln \frac{v_T}{v_0} = \ln \frac{a_0 + \sum_{t=1}^T a_t}{a_0}~.
\end{align*}
Using the above and the definition of $D_t$, we obtain
\begin{align*}
\sum_{t=1}^T \bx_t D_t^{-1} \bx_t = \sum_{i=1}^d \sum_{t=1}^T \frac{\bx^2_{t,i}}{1+\frac{1}{r} \sum_{j=1}^t \bx^2_{j,i}} = r \sum_{i=1}^d \sum_{t=1}^T \frac{\bx^2_{t,i}}{r+\sum_{j=1}^t \bx^2_{j,i}}
\leq r \sum_{i=1}^d \ln \frac{r+\sum_{t=1}^T \bx^2_{t,i}}{r}~.
\end{align*}
\end{proof}
We conclude the appendix by proving the results required to solve the implicit logarithmic equations of Section~\ref{sec:diagonal}. We use the following result ---see \citep[Lemma~2]{OrabonaCG12}.
\begin{lemma}
\label{lemma:log1}
Let $a,x>0$ be such that $x \leq a \ln{x}$. Then for all $n > 1$
\[
    x \leq \frac{n}{n-1} a \ln{\frac{n a}{e}}~.
\]
\end{lemma}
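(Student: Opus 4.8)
The plan is to reduce the implicit inequality $x\le a\ln x$ to an explicit bound by replacing $\ln x$ with a linear upper bound coming from concavity --- exactly the tangent-line estimate $\ln b \le \ln a + \tfrac{b-a}{a}$ already used in the proof of Lemma~\ref{lemma:log_diagonal}. Applied at an arbitrary reference point $x_0>0$, this estimate reads $\ln x \le \tfrac{x}{x_0} + \ln x_0 - 1$ for all $x>0$.

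Substituting this into the hypothesis gives $x \le a\bigl(\tfrac{x}{x_0} + \ln x_0 - 1\bigr)$, which rearranges to
\[
    x\Bigl(1-\tfrac{a}{x_0}\Bigr) \le a\,(\ln x_0 - 1)~.
\]
I would then choose $x_0 = n a$ for the given $n>1$, so that $1-\tfrac{a}{x_0} = 1-\tfrac1n = \tfrac{n-1}{n} > 0$. Dividing both sides by this positive quantity and writing $\ln(na)-1 = \ln\tfrac{na}{e}$ yields $x \le \tfrac{n}{n-1}\,a\ln\tfrac{na}{e}$, which is the claim.

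There is essentially no hard part here: the only point requiring care is that the coefficient $1-a/x_0$ be strictly positive, so that dividing through preserves the direction of the inequality --- this is precisely why the choice $x_0=na$ with $n>1$ works, and why the statement is restricted to $n>1$. As a sanity check, the chain of inequalities forces $a\ln\tfrac{na}{e} \ge x\cdot\tfrac{n-1}{n} > 0$, hence $na>e$; this is automatically consistent with the premise, since $x\le a\ln x$ can hold only when $a\ln x>0$.
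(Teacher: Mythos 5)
Your argument is correct: the tangent-line bound $\ln x \le \ln x_0 + \tfrac{x-x_0}{x_0}$ applied at $x_0 = na$ turns the implicit inequality into $x\bigl(1-\tfrac{1}{n}\bigr) \le a\ln\tfrac{na}{e}$, and dividing by the positive factor $\tfrac{n-1}{n}$ gives exactly the claim. Note that the paper does not prove this lemma at all --- it is imported by citation from \citep[Lemma~2]{OrabonaCG12} --- so your self-contained proof, which reuses the same concavity estimate the paper employs in Lemma~\ref{lemma:log_diagonal}, is a perfectly adequate (and standard) way to establish it.
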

This allows to prove the following easy corollaries.
\begin{cor}
\label{corollary:log1}
For all $a,b,c,d,x>0$ such that $x \leq a \ln(b x+c)+d$, we have
\[
    x \leq \frac{n}{n-1}\left(a \ln{\frac{n a b}{e}} + d\right)+\frac{c}{b} \frac{1}{n-1}~.
\]
\end{cor}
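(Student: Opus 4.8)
The plan is to avoid invoking Lemma~\ref{lemma:log1} as a black box and instead reuse the one elementary fact on which it rests: by concavity of the logarithm, the graph of $\ln$ lies below its tangent line at any point $z>0$, i.e.
\[
    \ln u \;\le\; \frac{u}{z} + \ln\frac{z}{e}
    \qquad\text{for all } u,z>0 .
\]
The first step is to apply this with $u = bx+c$, keeping $z>0$ as a free parameter, so that the hypothesis $x \le a\ln(bx+c)+d$ becomes a \emph{linear} inequality in $x$:
\[
    x \;\le\; a\Bigl(\frac{bx+c}{z} + \ln\frac{z}{e}\Bigr) + d
    \;=\; \frac{ab}{z}\,x + \frac{ac}{z} + a\ln\frac{z}{e} + d .
\]

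Next I would collect the $x$ terms to get $\bigl(1 - \tfrac{ab}{z}\bigr)x \le \tfrac{ac}{z} + a\ln\tfrac{z}{e} + d$, and then specialize to $z = nab$ for an arbitrary $n>1$. This choice makes the coefficient $1-\tfrac{ab}{z}=1-\tfrac1n=\tfrac{n-1}{n}$ strictly positive, so dividing preserves the inequality and yields
\[
    x \;\le\; \frac{n}{n-1}\Bigl(a\ln\frac{nab}{e} + d + \frac{c}{nb}\Bigr)
    \;=\; \frac{n}{n-1}\Bigl(a\ln\frac{nab}{e} + d\Bigr) + \frac{c}{b}\cdot\frac{1}{n-1},
\]
which is exactly the claimed bound. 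The only things needing care are that $z=nab>ab$ (so the division keeps the direction of the inequality) and that the residual term $\tfrac{ac}{z}=\tfrac{c}{nb}$, after multiplication by $\tfrac{n}{n-1}$, collapses precisely to $\tfrac{c}{b(n-1)}$.

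I do not expect a genuine obstacle; the one point worth flagging is \emph{why} one cannot simply quote Lemma~\ref{lemma:log1}. Substituting $v=bx+c$ into the hypothesis only gives $v \le ab\ln v + (bd+c)$, an inequality of the form $v\le\alpha\ln v+\gamma$ with a stray additive constant $\gamma=bd+c>0$, which does not match the clean hypothesis $v\le\alpha\ln v$ of Lemma~\ref{lemma:log1}; removing $\gamma$ would require exactly the tangent-line step above, so it is cleaner to perform that step once, directly. As a sanity check, note that if the right-hand side of the displayed bound were nonpositive, the same computation would force $\tfrac{n-1}{n}x\le 0$, contradicting $x>0$; hence the hypothesis is consistent only when the asserted inequality is a genuine positive bound, and no separate edge-case analysis is needed.
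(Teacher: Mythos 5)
Your proof is correct: every step checks out, and the final algebra $\tfrac{n}{n-1}\cdot\tfrac{c}{nb}=\tfrac{c}{b(n-1)}$ does reproduce the stated bound exactly. The paper itself gives no proof of this corollary; it merely asserts that it follows easily from Lemma~\ref{lemma:log1}, whose own proof (in the cited reference) rests on precisely the tangent-line bound $\ln u \le \tfrac{u}{z}+\ln\tfrac{z}{e}$ that you use. So in spirit the two routes coincide, but your version is self-contained and, as you rightly point out, more honest: the substitution $v=bx+c$ turns the hypothesis into $v\le ab\ln v+(bd+c)$, which has an additive constant that the hypothesis of Lemma~\ref{lemma:log1} does not accommodate, so the lemma cannot be quoted as a black box without first disposing of that constant --- and disposing of it requires the very tangent-line step you perform. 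Carrying out that step once, directly on $x\le a\ln(bx+c)+d$ with the free tangency point $z$ and the choice $z=nab$, handles all three constants $b,c,d$ simultaneously and is the cleanest way to get the claim. Your closing remark about the sign of the right-hand side is a harmless observation rather than a needed case analysis, and the only hypotheses you actually use ($bx+c>0$ so the logarithm is defined, and $n>1$ so the coefficient $\tfrac{n-1}{n}$ is positive and division preserves the inequality) are all guaranteed by the assumptions of the corollary.
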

\begin{cor}
\label{corollary:log2}
For all $a,b,c,d,x>0$ such that
\begin{equation}
\label{eq:log2_1}
    x \leq \sqrt{a \ln(b x+1)+c}+d
\end{equation}
we have 
\[
    x \leq \sqrt{a \ln\left(\frac{\sqrt{8} a b^2}{e} + 2 b \sqrt{c} + 2 d b +2\right)+c}+d~.
\]
\end{cor}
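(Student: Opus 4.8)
The plan is to turn the implicit inequality~\eqref{eq:log2_1} into an explicit (if crude) upper bound on $x$, and then to feed that bound back into the logarithm appearing in~\eqref{eq:log2_1}. Two elementary facts do most of the work: $\ln v \le v/e$ for all $v>0$ (the function $v/e-\ln v$ attains its minimum $0$ at $v=e$), and $\sqrt{u+v}\le\sqrt u+\sqrt v$ for $u,v\ge 0$.

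First I would dispose of the trivial regime $x\le d$, in which the claimed bound holds because its right-hand side is at least $d$ (the argument of the logarithm there is at least $2>1$, so the square root is real and nonnegative). Assuming $x>d$, I square~\eqref{eq:log2_1} to get $(x-d)^2 \le a\ln(bx+1)+c$, then use $(x-d)^2 \ge x^2-2dx$ and $\ln(bx+1)\le (bx+1)/e$ to obtain the quadratic inequality $x^2 \le \bigl(2d+\tfrac{ab}{e}\bigr)x + \tfrac ae + c$. Solving this for $x$ and relaxing the resulting square root via $\sqrt{p^2+q}\le p+\sqrt q$ yields a crude bound $x\le x_0$ with, say, $x_0 = 2d+\tfrac{ab}{e}+\sqrt{\tfrac ae+c}$.

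Then I would substitute $x\le x_0$ back into the logarithm: since $\ln$ is increasing and $bx+1>1$, we have $\ln(bx+1)\le\ln(bx_0+1)$, and a couple more uses of $\sqrt{u+v}\le\sqrt u+\sqrt v$ together with an AM--GM step (to absorb the leftover $\sqrt{ab^2/e}$-type term into $\tfrac{ab^2}{e}$ plus a constant) show that $bx_0+1 \le \tfrac{\sqrt 8\,ab^2}{e}+2b\sqrt c+2db+2$. Plugging this into~\eqref{eq:log2_1} itself, $x\le\sqrt{a\ln(bx+1)+c}+d\le\sqrt{a\ln(bx_0+1)+c}+d$, gives exactly the claimed inequality. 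As an alternative to the explicit quadratic solve one could first derive any bound of order $ab$ and then invoke Corollary~\ref{corollary:log1} with $n=2$ on the squared inequality, but the direct route seems the most economical.

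The only real obstacle is constant bookkeeping: each relaxation used (namely $\sqrt{u+v}\le\sqrt u+\sqrt v$, AM--GM, and dropping the $d^2$ and $-2dx$ cross terms) must be arranged so that the final argument of the logarithm lands below $\tfrac{\sqrt 8\,ab^2}{e}+2b\sqrt c+2db+2$ rather than merely in its ballpark. The crucial design choice is to use $\ln(bx+1)\le(bx+1)/e$ instead of the looser $\ln(bx+1)\le bx$, since this is what keeps $x_0$ linear (and not quadratic) in $a$, which is precisely what makes the substitution step close.
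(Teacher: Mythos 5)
Your proof is correct, and the constants do close: with $x_0 = 2d + \tfrac{ab}{e} + \sqrt{\tfrac{a}{e}+c}$ you get $bx_0+1 \le 2db + \tfrac{ab^2}{e} + \sqrt{\tfrac{ab^2}{e}}\,b^{0} \cdot b\big/b + b\sqrt{c} + 1$, i.e.\ $bx_0+1 \le 2db + u + \sqrt{u} + b\sqrt{c} + 1$ with $u = \tfrac{ab^2}{e}$, and since $\sqrt{u}\le\tfrac12(u+1)$ by AM--GM and $\sqrt{8}-1>\tfrac12$, this is at most $\tfrac{\sqrt{8}\,ab^2}{e} + 2b\sqrt{c} + 2db + 2$ as required; the case split on $x\le d$ is genuinely needed before squaring and you handle it correctly. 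Your overall strategy --- derive a crude explicit bound on $x$, then substitute it back into the logarithm of~\eqref{eq:log2_1} --- is exactly the paper's, but the derivation of the crude bound differs. The paper squares the entire inequality (paying a factor of $2$ via $(\alpha+\beta)^2\le 2\alpha^2+2\beta^2$ and absorbing $2\ln(bx+1)\le\ln(2b^2x^2+2)$), obtaining an implicit logarithmic inequality in $x^2$, and then invokes Corollary~\ref{corollary:log1} with $n=2$ --- precisely the alternative route you mention at the end of your proposal. You instead keep everything linear in $x$ by expanding $(x-d)^2\ge x^2-2dx$, linearizing the logarithm via $\ln v\le v/e$, and solving the resulting quadratic. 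Your route is more elementary, bypassing Lemma~\ref{lemma:log1} and Corollary~\ref{corollary:log1} entirely, and it avoids the constant-doubling caused by squaring both sides; the two intermediate bounds (your $x_0$ versus the paper's $\tfrac{2\sqrt{2}ab}{e}+2\sqrt{c}+2d+\tfrac1b$) are of the same order and both land below the stated argument of the logarithm. What the paper's detour through Corollary~\ref{corollary:log1} buys is reuse of machinery already needed elsewhere; what your direct quadratic solve buys is a shorter, self-contained argument with slightly cleaner bookkeeping.
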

\begin{proof}
Assumption~(\ref{eq:log2_1}) implies
\begin{align}
x^2 &\leq \left(\sqrt{a \ln(b x+1)+c}+d\right)^2 
\leq 2 a \ln(b x+1) + 2 c + 2 d^2 = a \ln(b x+1)^2 + 2 c + 2 d^2 \nonumber \\
&\leq a \ln(2 b^2 x^2 + 2) + 2 c + 2 d^2~. \label{eq:log2_2} 
\end{align}
From Corollary \ref{corollary:log1} we have that if $f,g,h,i,y>0$ satisfy $y \leq f \ln(g x+h)+i$, then
\[
y \leq \frac{n}{n-1}\left(f \ln{\frac{n f g}{e}} + i\right)+\frac{h}{g} \frac{1}{n-1} \leq \frac{n}{n-1}\left(\frac{n f^2 g}{e^2} + i\right)+\frac{h}{g} \frac{1}{n-1}
\]
where we have used the elementary inequality $\ln y \leq \frac{y}{e}$ for all $y \geq 0$.
Applying the above to~(\ref{eq:log2_2}) we obtain
\begin{align*}
x^2 &\leq \frac{n}{n-1}\left(\frac{2 n a^2 b^2}{e^2} + 2 c + 2 d^2 \right)+\frac{1}{b^2} \frac{1}{n-1}
\end{align*}
which implies
\begin{align}
x &\leq \sqrt{\frac{n}{n-1}}\left(\frac{\sqrt{2 n} a b}{e} + \sqrt{2 c} + \sqrt{2} d \right)+\frac{1}{b} \frac{1}{\sqrt{n-1}}~. \label{eq:log2_3}
\end{align}
Note that we have repeatedly used the elementary inequality $\sqrt{x+y}\leq \sqrt{x}+\sqrt{y}$.
Choosing $n=2$ and applying~(\ref{eq:log2_3}) to~(\ref{eq:log2_1}) we get
\begin{align*}
x \leq \sqrt{a \ln(b x+1)+c}+d
  \leq \sqrt{a \ln\left(\frac{\sqrt{8} a b^2}{e} + 2 b \sqrt{c} + 2 d b +2\right)+c}+d
\end{align*}
concluding the proof.
\end{proof}


\bibliographystyle{plainnat}
\bibliography{learning}

\end{document}